\newtheorem{theorem}{Theorem}[section]
\newtheorem{corollary}[theorem]{Corollary}
\theoremstyle{definition}
\theoremstyle{remark}
\providecommand{\argmin}{\operatorname*{arg\,min}}
\renewcommand{\vec}{\bm}
\def \z {\vec{z}}
\def \x {\vec{x}}
\def \y {\vec{y}}
\def \vtheta {\vec{\theta}}
\def \p {\mathbf{p}}
\def \R {\mathbb{R}}
\icmltitlerunning{}
\begin{document}

\twocolumn[
\icmltitle{Model Steering: Learning with a Reference Model Improves \\Generalization Bounds and Scaling Laws}



\icmlsetsymbol{equal}{*}

\begin{icmlauthorlist}
\icmlauthor{Xiyuan Wei}{tamu}
\icmlauthor{Ming Lin}{oracle}
\icmlauthor{Fanjiang Ye}{iu}
\icmlauthor{Fengguang Song}{iu}
\icmlauthor{Liangliang Cao}{google}
\icmlauthor{My T. Thai}{uf}
\icmlauthor{Tianbao Yang}{tamu}
\end{icmlauthorlist}

\icmlaffiliation{tamu}{Texas A\&M University}
\icmlaffiliation{oracle}{Oracle}
\icmlaffiliation{iu}{Indiana University}
\icmlaffiliation{google}{Google}
\icmlaffiliation{uf}{University of Florida}

\icmlcorrespondingauthor{Tianbao Yang}{tianbao-yang@tamu.edu}

\icmlkeywords{Machine Learning, ICML, Contrastive Learning}

\vskip 0.3in
]



\printAffiliationsAndNotice{}  

\begin{abstract}
This paper formalizes an emerging learning paradigm that uses a trained model as a reference to guide and enhance the training of a target model through strategic data selection or weighting, named {\bf model steering}. While ad-hoc methods have been used in various contexts, including the training of large foundation models,  its underlying principles remain insufficiently understood, leading to sub-optimal performance. In this work, we propose a theory-driven framework for model steering called {\bf DRRho risk minimization}, which is rooted in Distributionally Robust Optimization (DRO). Through a generalization analysis, we provide theoretical insights into why this approach improves generalization and data efficiency compared to training without a reference model. To the best of our knowledge, this is the first time such theoretical insights are provided for the new learning paradigm, which significantly enhance our understanding and practice of model steering.  Building on these insights and the connection between contrastive learning and DRO, we introduce a novel method for Contrastive Language-Image Pretraining (CLIP) with a reference model, termed DRRho-CLIP. Extensive experiments validate the theoretical insights, reveal a superior scaling law compared to CLIP without a reference model, and demonstrate its strength over existing heuristic approaches. Code is released at \href{https://github.com/Optimization-AI/DRRho-CLIP}{github.com/Optimization-AI/DRRho-CLIP}  
\end{abstract}

\section{Introduction}
With the success of large foundation models, numerous companies and research groups have entered the race to develop state-of-the-art models. While the data and code are often proprietary, the resulting models are sometimes released publicly, such as the CLIP models from OpenAI~\citep{radford2021learning} and the Llama models from Meta~\citep{dubey2024llama}. This raises an intriguing question:

``{\it How can we leverage public models to improve training of a target model on custom datasets?}''

To study this question, we explore an emerging learning paradigm that leverages a trained model as a reference to guide and enhance the training through strategic data selection or weighting. We refer to this paradigm as {\bf model steering}.  Unlike the widely adopted knowledge distillation framework, model steering does not assume that the reference model is a stronger teacher; in fact, it can lead to the training of a model that ultimately surpasses the reference model in performance, i.e., enabling weak to strong generalization (cf. Figure~\ref{fig:teaser_imagenet}). 

\begin{figure}
        \centering
        \includegraphics[width=0.4\linewidth]{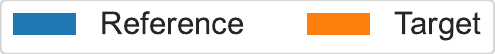}
        \includegraphics[width=0.7\linewidth]{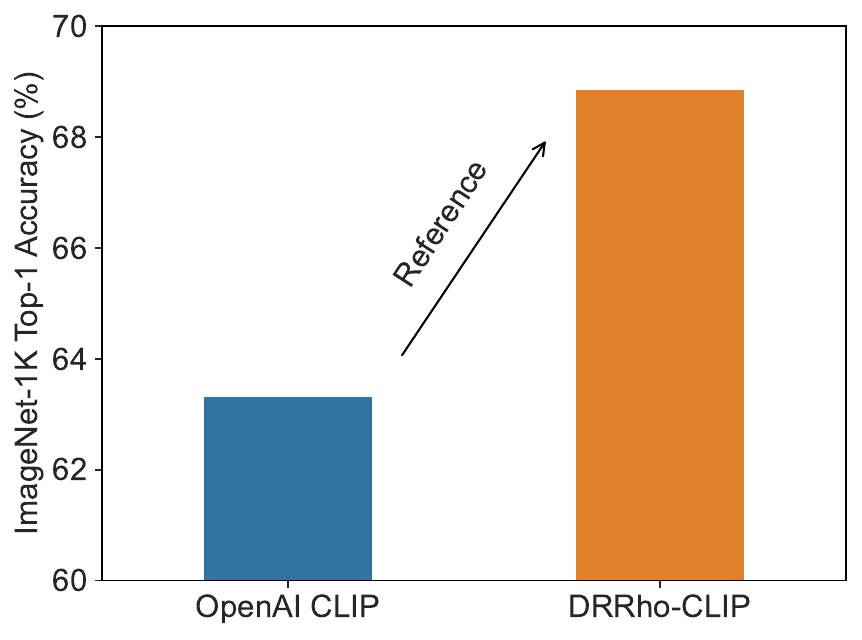}
        \vspace*{-0.1in}\caption{Comparison between a  target model (ViT-B/16) trained by the proposed DRRho-CLIP and the reference model it leverages. OpenAI CLIP (ViT-B/32) was trained on a private 400M dataset with 12.8B samples seen and 32768 batch size. DRRho-CLIP model was trained on DFN-192M with 1.28B samples seen and 5120 batch size, and using OpenAI CLIP as a reference model~\footnotemark.}
        \label{fig:teaser_imagenet}
        \vspace*{-0.3in}
\end{figure}
\footnotetext{Our training took 376 GPU hours on 8 H100 (2 days), OpenAI CLIP (ViT-L/14) model was trained on  256 V100 with 12 days, which gives an estimate of 256*12*24/11.6=6356 GPU hours for training  ViT-B/32 as its FLOPs is 11.6 smaller.}
A few works have studied learning with a reference model (LEAR) in different contexts and demonstrated its effectiveness in accelerating the training of various models, including classification models~\cite{mindermann2022prioritized,evans2025bad}, large language models~\cite{lin2024not,xie2023doremi}, CLIP models~\cite{evans2024data,10.1007/978-3-031-72643-9_16}. An interesting idea emerging from these studies is to perform online data selection, sampling, or data weighting using the reference model through a shifted loss called the RHO loss $\ell(\vtheta,\vec{z}) - \ell(\vtheta_{\text{ref}},  \vec{z})$, where $\theta$ denotes the target model to be learned, and $\vtheta_{\text{ref}}$ denotes the reference model, $\vec{z}$ denotes a data sample, and $\ell(\cdot, \cdot)$ denotes a loss of interest. This approach is intuitive in the sense that a data $\vec{z}$ with a high loss of the current model $\ell(\vtheta_t,\vec{z})$ and a low loss of the reference model $ \ell( \vtheta_{\text{ref}}, \vec{z})$ means that it has high learnability and should be used for training~\cite{mindermann2022prioritized}.

However, the theory behind this approach is quite limited, especially on improving generalization, which hinders our understanding of its effectiveness and derivation of best practice for model steering. \citet{mindermann2022prioritized} tried to motivate this approach through maximizing the likelihood of true labels of classification for a hold-out dataset based on the selection of informative data in the current mini-batch. Through Bayes' theorem and the approximation of negative log-likelihoods by losses, their analysis yields a data selection strategy by selecting data in the mini-batch that have the top values of $\ell(\vtheta_t, \vec{z}) - \ell(\vtheta_{\text{ref}},\vec{z})$, which is termed RHO loss selection. While their analysis offers some intuition into why the RHO loss is a reasonable choice for data selection, it falls short of providing guarantees on how and why this approach improves generalization with reduced training data. Moreover, there is no theoretical framework to guide the optimal selection  or weighting of data to accelerate training effectively. The heuristic approach that selects data with top RHO losses in the mini-batch is not necessarily the best. 

To address this gap, this paper introduces a novel learning framework for enhancing the understanding and practice of model steering based on the RHO loss $\ell(\vtheta, \vec{z}) - \ell( \vtheta_{\text{ref}}, \vec{z})$. Here, we use the term RHO loss in a broader sense, which is not necessarily restricted to classification as derived in~\citet{mindermann2022prioritized}.  Our framework builds on {\it Distributionally Robust Optimization (DRO)}. Traditional DRO seeks to minimize the worst-case risk over perturbed data distributions within an uncertainty set derived from the empirical distribution. We extend this idea by applying DRO to the RHO loss $\ell(\vtheta, \vec{z}) - \ell(\vtheta_{\text{ref}}, \vec{z})$ across the training data, yielding a risk function for model steering, which we term DRRho risk. Leveraging the generalization theory of DRO, we derive theoretical generalization bounds of  DRRho risk minimization, offering insights into how it enhances generalization. To the best of our knowledge, this work presents the first generalization theory for model steering, significantly advancing our understanding of its effectiveness.

Our framework not only provides a theoretical foundation for existing heuristic approaches to data selection, sampling, or weighting but also offers improved practices for model steering. To illustrate its practical utility, we focus on contrastive language-image pretraining (CLIP) with a reference model. By leveraging the connection between contrastive loss and DRO~\citep{qiu2023not}, we introduce a novel method for CLIP with a reference model, termed DRRho-CLIP, which utilizes the DRRho risk for each anchor data point.

Our experiments  demonstrate the effectiveness of DRRho-CLIP. For example, when using OpenAI's CLIP (ViT-B/32) model as a reference, DRRho-CLIP with ViT-B/16 as the backbone trained from scratch on the DFN-192M dataset~\cite{fang2024data} achieves 68.84\% zero-shot classification accuracy on ImageNet-1K data. It outperforms OpenAI's CLIP performance of 63.32\% trained on a different 400M dataset, and OpenCLIP's performance of 67.8\% trained on the same data (cf. \Cref{fig:teaser_imagenet}). In addition, our extensive experiments using various reference models and large-scale datasets reveal the following:  (1) DRRho-CLIP achieves comparable performance to the standard CLIP training method (without a reference model) while using significantly less training data (e.g., a 50\% reduction). This offers great potential in reducing the burden on collecting high-quality data.  (2) DRRho-CLIP outperforms existing heuristic data sampling or selection methods applied to the standard CLIP training framework. (3) When paired with a strong reference model, DRRho-CLIP integrates seamlessly with knowledge distillation, surpassing existing knowledge distillation methods for CLIP training~\citep{vasu2024mobileclip}. (4) In terms of scaling performance, DRRho-CLIP has a better scaling law than OpenCLIP~\citep{cherti2023reproducible} (cf. Figure~\ref{fig:scaling}).

\begin{figure}
    \centering
        \centering
        \includegraphics[width=0.65\linewidth]{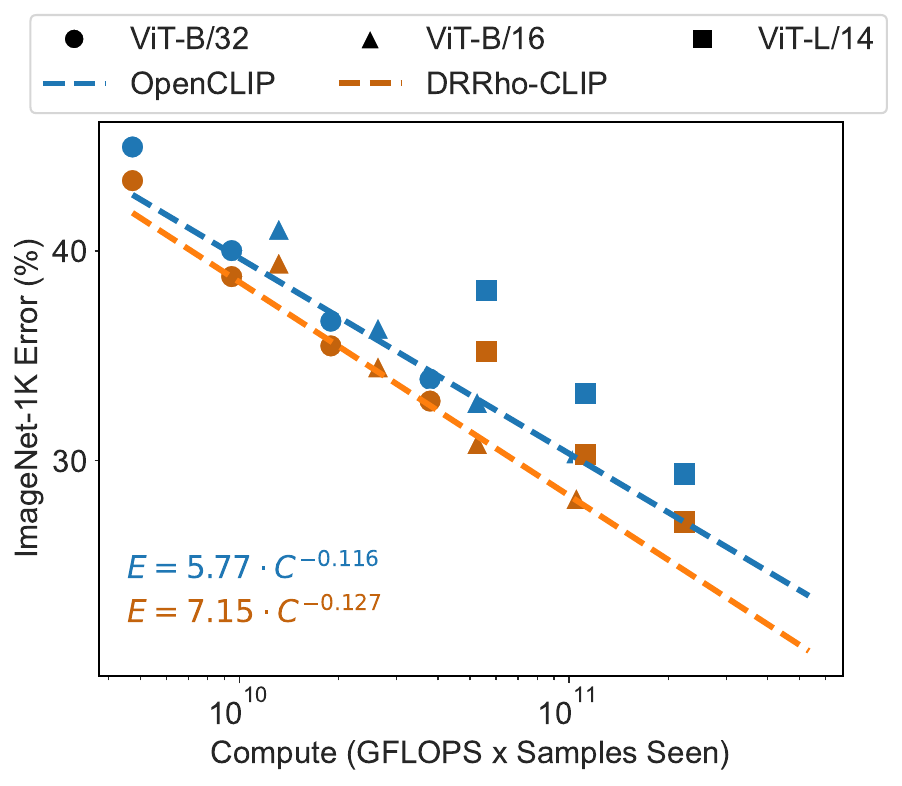}
        

    \vspace*{-0.15in}\caption{
    Scaling performance of OpenCLIP~\citep{cherti2023reproducible} and the proposed DRRho-CLIP, which uses the OpenAI CLIP model~\citep{radford2021learning} as the reference model. We conduct experiments of the two methods under different settings to fit scaling laws, as shown in the bottom left corner (c.f. \Cref{sec:experiments} for more detail).} 
    \label{fig:scaling}\vspace*{-0.25in}
\end{figure}
Our {\bf contributions} are summarized as follows:  
\begin{itemize}[leftmargin=20pt]
\vspace*{-0.1in}
\item We propose a novel framework for model steering based on DRO and th RHO loss. Theoretical generalization bounds are derived and analyzed, which provide insights into why it improves generalization.
\item We introduce DRRho-CLIP, a method for training CLIP models with a reference model. Extensive experiments on large-scale datasets and various reference models demonstrate the effectiveness of DRRho-CLIP, including better scaling laws. 
\end{itemize}

 
\section{Related Work}

Using a pretrained model to improve the training of a target model is not a new idea, which has been long studied in deep learning. A typical approach is transfer learning~\citep{DBLP:journals/corr/DonahueJVHZTD13}, which uses a pretrained encoder as initialization for  finetuning a target model on a different supervised dataset. However, transfer learning restricts the target model to have the same structure (at least in the encoder part) as the pretrained model. There have been some studies trying to expand the architecture of the pretrained  model to that of the target model so that the target model can inherit the knowledge of the pretrained model~\citep{chen2016net2net,wang2023learning,du2024stacking}.   Another category of related works is knowledge distillation, which distills knowledge from a stronger, usually larger, teacher model to a smaller student model~\citep{hinton2015distilling}. Knowledge distillation has been extensively studied for learning various models~\citep{hinton2015distilling,vasu2024mobileclip,qin2022knowledge}. However, these approaches do not consider using the pretrained model for data selection or weighting to facilitate the training of a target model. Hence, the studied technique in this paper is complementary to knowledge distillation. 

Leveraging a pretrained model for offline data selection or pruning has been studied. For example, \citet{schuhmann2021laion,schumann2022laion,gadre2023datacomp,fang2024data} leveraged a pretrained model to curate a training dataset to accelerate the training of CLIP models. For each image-text pair in the original dataset, they use the pretrained model (e.g., OpenAI's CLIP model) to compute a score (e.g., similarity score between the image and text) for evaluating its quality. A new subset is then constructed by selecting samples whose score exceeds a certain threshold. \citet{ankner2024perplexed,marion2023less} have studied a similar idea for pruning the training data for language modeling.

This paper falls into the category of exploiting a pretrained model for online data selection, sampling or weighting. We refer to this learning paradigm as model steering, as the pretrained model is used as a reference to guide training.  \citet{mindermann2022prioritized} proposed the RHO loss for selecting samples in a mini-batch for training a classification model. Though similar to offline data curation approaches, the key differences lie in: (1) a sample that may be discarded in a later stage of training could be useful in the early stage of training; (2) both the model to be trained and the reference model are used together for data selection, sampling, or weighting. The RHO loss has been used in multiple papers \cite{mindermann2022prioritized,evans2025bad,lin2024not,xie2023doremi,evans2024data,10.1007/978-3-031-72643-9_16} for training different models. Different from these prior works, our work aims to provide a theoretical foundation for data selection, sampling or weighting with the RHO loss, and use it to derive better practical approaches for training CLIP models. To the best of our knowledge, this is the first work that provides a generalization analysis for learning with a reference model.



Our algorithm for optimizing the proposed contrastive loss informed by DRRho risk is based on recent advances for optimizing global contrastive losses~\cite{yuan2022provable,qiu2023not,wei2024fastclip}. In particular, \citet{yuan2022provable} proposed an optimization algorithm termed SogCLR for optimizing a global contrastive loss, which does not suffer from the limitation of using a small mini-batch size. \citet{qiu2023not} has established the connection between DRO and the global contrastive losses, which enables optimization of individual temperature parameters in contrastive learning. \citet{wei2024fastclip} proposed techniques for  training CLIP models in a distributed system. 


Our theory of DRRho risk minimization is built on existing generalization error bounds for distributionally robust optimization (DRO)~\cite{duchi2016variance, doi:10.1287/moor.2020.1085}, which will be reviewed in next section. We notice that a group DRO formulation was considered in~\cite{xie2023doremi} for using a reference model to learn the weight of a dataset when training language models. However, their formulation does not consider the regularization on the weight variables in DRO, which cannot benefit from the theoretical guarantee as derived in this paper. 


\section{Preliminaries: DRO}
\label{sec:dro}


\textbf{Notations}: Let $\z\sim  P$ denote a random data, where $P$ represents the distribution of data. We denote by $\vtheta\in\Theta$ the model parameters, where $\Theta$ is the space of model parameters.  Let \(\ell(\vec{\theta}, \vec{z})\) denote a loss of  interest. The expected risk of \(\vec{\theta}\) is defined as \(R(\vec{\theta})= \mathbb{E}_{\z\sim P}[\ell(\vec{\theta}, \vec{z})]\).  Denote by $\vtheta_*=\argmin_{\vtheta\in\Theta}R(\vtheta)$ the optimal solution that minimizes the expected risk. For any model $\vtheta\in\Theta$, the excess risk $R(\vec{\theta}) - R(\vec{\theta}_*)$ is a measure of generalization performance of $\vtheta$.  Let $\mathcal F=\{\ell(\vtheta, \cdot), \vtheta\in\Theta\}$. To derive the generalization error bounds, we need some complexity measure of the function class $\mathcal F$. We follow~\citet{duchi2016variance} by using the VC-dimension of $\mathcal F$ denoted by $d_v = \text{VC}(\mathcal F)$, which is defined as VC-dimension of the set of subgraphs of functions in $\mathcal F$~\cite{vanderVaart1996}. 

Given \(n\) samples \(\vec{z}_1, \ldots, \vec{z}_n\) drawn i.i.d. from \(P\), DRO solves the following problem:
\vspace{-3pt}
\begin{equation}
    \label{eq:dro}
    \hat{\vtheta}_*\in \argmin_{\vec{\theta}\in \Theta}\sup_{\p\in\Delta \atop D_{\phi}(\p, 1/n) \le \rho / n} \sum_{i=1}^np_i\ell(\vec{\theta}, \z_i),
\vspace{-3pt}
\end{equation}
where $\Delta=\{\p\in\R^n: \p\geq 0, \sum_{i=1}^np_i =1\}$ is a simplex, $D_\phi(\p, 1/n) = \frac{1}{n}\sum_{i=1}^n\phi(p_in)$ denotes the \(\phi\)-divergence between $\p$ and uniform probabilities, (e.g., Kullback-Leibler divergence~\citep{kullback1997information}, CVaR-divergence~\citep{levy2020large},  $\chi^2$-divergence~\citep{duchi2016variance}), and \(\rho\ge 0\) is a hyperparameter.  DRO minimizes a worst-case risk  over all possible perturbed distributions from the empirical distribution within an uncertainty set.  The supremum over the weights $\p$ indicates that if a sample is hard (i.e., has a large loss value), then it will have a large weight so that the algorithm will pay more attention to optimizing it. 

The generalization error bounds of DRO have been developed~\citep{duchi2016variance,doi:10.1287/moor.2020.1085}. We present a corollary from~\citeauthor[Theorem 3]{duchi2016variance}.
\begin{theorem}   \label{thm:dro}
    Let $D_{\phi}$ be $\chi^2$-divergence with $\phi(t)=(t-1)^2/2$. Assume that $\ell(\vtheta,\cdot)\in[M_0,M_1]$ with $M=M_1-M_0$. Let \(C_{1}= \log \frac{1}{\delta}+ \log c+ d_{v}\log 16e+ 2d_{v}\log n\) for a given \(\delta> 0\) and some constant \(c< \infty\). If \(C_{1}\cdot n\geq 8M^2\) and \(\rho\geq 9C_{1}\), then with probability at least \(1- \delta\),
    \begin{align*}
        R(\hat{\vtheta}_*)\leq & \inf_{\vec{\theta}\in \Theta} \left(R(\vec{\theta}) + 2\sqrt{\frac{2 \rho}{ n} \mathrm{Var}(\ell(\vec{\theta}, \cdot))}\right) + \frac{C_{2}}{n},
    \end{align*}
    where $\mathrm{Var}(\ell(\vec{\theta}, \cdot))$ denotes the variance of $\ell(\vec{\theta}, \z)$ for $\z\sim P$ and \(C_{2}= (25\rho/3+ 2)M\).
\end{theorem}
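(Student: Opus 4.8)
The plan is to obtain the bound as a specialization of the general DRO generalization result of \citet{duchi2016variance} (their Theorem 3) to the $\chi^2$-divergence $\phi(t)=(t-1)^2/2$. Write $\mathcal R_{n,\rho}(\vtheta):=\sup_{\p\in\Delta,\,D_{\phi}(\p,1/n)\le\rho/n}\sum_{i=1}^n p_i\ell(\vtheta,\z_i)$ for the empirical robust risk, so that $\hat{\vtheta}_*$ minimizes $\mathcal R_{n,\rho}$ over $\Theta$. The argument rests on two ingredients: (i) a \emph{one-sided uniform upper bound} stating that, with probability at least $1-\delta$, $R(\vtheta)\le\mathcal R_{n,\rho}(\vtheta)+c'M\rho/n$ holds \emph{simultaneously} for all $\vtheta\in\Theta$; and (ii) a \emph{pointwise variance expansion} stating that, for each fixed $\vtheta\in\Theta$, $\mathcal R_{n,\rho}(\vtheta)\le R(\vtheta)+2\sqrt{\tfrac{2\rho}{n}\mathrm{Var}(\ell(\vtheta,\cdot))}+c''M\rho/n$. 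Granting these, for an arbitrary comparator $\vtheta\in\Theta$ I chain $R(\hat{\vtheta}_*)\le\mathcal R_{n,\rho}(\hat{\vtheta}_*)+c'M\rho/n$ by (i), then $\mathcal R_{n,\rho}(\hat{\vtheta}_*)\le\mathcal R_{n,\rho}(\vtheta)$ by optimality of $\hat{\vtheta}_*$, and finally $\mathcal R_{n,\rho}(\vtheta)\le R(\vtheta)+2\sqrt{\tfrac{2\rho}{n}\mathrm{Var}(\ell(\vtheta,\cdot))}+c''M\rho/n$ by (ii); taking the infimum over $\vtheta$ and tracking constants yields the claimed $C_2=(25\rho/3+2)M$.

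For ingredient (ii) I would use the exact variational representation of the $\chi^2$ robust risk: with $\phi(t)=(t-1)^2/2$ one has $\sup_{\p}\sum_i p_i a_i=\bar a_n+\sqrt{2\rho/n}\,\widehat{s}_n+(\text{a correction term})$, where $\bar a_n$ and $\widehat{s}_n^{\,2}$ are the empirical mean and variance of $(a_i)_{i=1}^n$ — the standard variance-regularization identity (see \citet{duchi2016variance, doi:10.1287/moor.2020.1085}). Applying it with $a_i=\ell(\vtheta,\z_i)$, boundedness $\ell(\vtheta,\cdot)\in[M_0,M_1]$ controls the correction by $O(M\rho/n)$; then a Bernstein-type concentration of $\bar a_n$ to $R(\vtheta)$, together with an empirical-variance concentration of the form $\widehat{s}_n\le 2\sqrt{\mathrm{Var}(\ell(\vtheta,\cdot))}+O(M\sqrt{\rho/n})$ — which accounts for the factor $2$ in front of the square root — gives (ii). The hypothesis $C_1 n\ge 8M^2$ is what makes these concentration events informative.

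Ingredient (i) is the main obstacle, and it is where the VC-dimension $d_v$ and the conditions $\rho\ge 9C_1$, $C_1 n\ge 8M^2$ enter. The idea is: if $R(\vtheta)>\mathcal R_{n,\rho}(\vtheta)+c'M\rho/n$ for some $\vtheta$, then, using the lower bound $\mathcal R_{n,\rho}(\vtheta)\ge\bar a_n+\sqrt{2\rho/n}\,\widehat{s}_n-O(M\rho/n)$ from the same variational formula, the population-to-empirical gap $R(\vtheta)-\bar a_n$ would have to exceed roughly $\sqrt{(2\rho/n)\widehat{s}_n^{\,2}}$; but a localized uniform Bernstein/Talagrand-type deviation inequality over $\mathcal F=\{\ell(\vtheta,\cdot):\vtheta\in\Theta\}$, with complexity measured by $d_v=\mathrm{VC}(\mathcal F)$, forces $R(\vtheta)-\bar a_n\lesssim\sqrt{\mathrm{Var}(\ell(\vtheta,\cdot))\,C_1/n}+MC_1/n$ uniformly, a contradiction once $\rho$ exceeds a fixed multiple of $C_1$. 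This is precisely the content of \citet[Theorem 3]{duchi2016variance}, so I would invoke it as a black box after checking that the $\chi^2$ instance satisfies its hypotheses; the remaining work is purely arithmetic, namely collecting the several $O(M\rho/n)$ contributions from both ingredients into the single constant $C_2=(25\rho/3+2)M$.
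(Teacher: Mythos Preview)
Your proposal is correct and follows essentially the same approach as the paper: the statement is obtained directly from \citet[Theorem~3]{duchi2016variance} together with their VC-dimension covering bound (their Corollary~3.1), specializing with $\epsilon=M/n$ and $t=C_1$ and collecting the residual terms into $C_2=(25\rho/3+2)M$. Your decomposition into ingredients (i) and (ii) accurately reflects the internal structure of Duchi--Namkoong's argument, but it is not needed here since you ultimately invoke their theorem as a black box---the paper does exactly that, treating the result as a restatement and reserving the nontrivial work for the reference-model version (Theorem~\ref{thm:ref}).
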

The above theorem indicates that DRO may have a better excess risk $R(\hat\vtheta_*) - R(\vtheta_*)$ than that of ERM when the variance term $\mathrm{Var}(\ell(\vec{\theta_*}, \cdot))$ at the optimal solution $\vtheta_* $ is small~\citep{duchi2016variance}.  Although the above result is derived for the $\chi^2$-divergence, one can follow \citet{doi:10.1287/moor.2020.1085} to derive similar results for other divergence, e.g., KL-divergence. Nevertheless, we focus on the theoretical insights that DRO can bring to our framework.



\section{DRRho Risk Minimization}
\label{sec:ref_model}

A central argument for the improved generalization bound of DRO is when the variance $\mathrm{Var}(\ell(\vec{\theta_*}, \cdot))$ is small. However, achieving a low variance is not always guaranteed. To address this, we propose to leverage a reference model $\vtheta_{\mathrm{ref}}$ to reduce this variance. Specifically, our framework of model steering involves replacing the standard loss function $\ell(\vtheta, \cdot)$ with the RHO loss, defined as $\hat\ell(\vtheta, \cdot) = \ell(\vtheta, \cdot) - \ell(\vtheta_{\mathrm{ref}}, \cdot)$. In particular, we define the following risk:
\vspace{-4pt}
\begin{equation}
    \label{eq:drrho}
    F(\vtheta): =\sup_{\p\in\Delta \atop  D_{\phi}(\p, 1/n) \le \rho / n} \sum_{i=1}^np_i(\ell(\vec{\theta}, \z_i) - \ell(\vec{\theta}_{\mathrm{ref}}, \z_i)).
\vspace{-4pt}
\end{equation}
We refer to $F(\vtheta)$ as the DRRho risk. Building upon this, we formulate the DRRho risk minimization problem:
\vspace{-2pt}
\begin{equation}   \label{eq:dro_ref_model}
    \tilde\vtheta_* \in \argmin_{\vtheta\in\Theta} F(\vtheta).
\vspace{-2pt}
\end{equation}
Next, we present the generalization error bounds of DRRho risk minimization. We consider the function class $\mathcal F_r=\{\ell(\vtheta, \cdot) - \ell(\vtheta_{\mathrm{ref}}, \cdot), \vtheta\in\Theta\}$ and abuse the notation $d_v = \text{VC}(\mathcal F_r)$. The proofs of the following results are presented in \Cref{sec:app_theory}.
\begin{theorem}   \label{thm:ref}
    Under the same setting of Theorem~\ref{thm:dro}, let \(C_{1}= \log \frac{1}{\delta}+ \log c+ d_{v}\log 16e+ 2d_{v}\log n\) for a given \(\delta> 0\) and some constant \(c< \infty\). If \(C_{1}\cdot n\geq 32M^2\) and \(\rho\geq 9C_{1}\), with probability at least \(1- \delta\),
    {\fontsize{9}{11}
    \begin{align*}
    R(\tilde{\vtheta}_*)\leq& \inf_{\vec{\theta}\in \Theta} \left(R(\vec{\theta}) + \sqrt{\frac{2\rho}{n} \mathrm{Var}(\ell(\vec{\theta}, \cdot)- \ell(\vec{\theta}_{\mathrm{ref}}, \cdot))}\right)+ \frac{C_{2}}{n},
    \end{align*}
    }
    where \(C_{2}= (50\rho/3+ 4)M\).
\end{theorem}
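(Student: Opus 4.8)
The plan is to recognize the DRRho risk $F(\vtheta)$ as exactly the $\chi^2$-DRO objective of \Cref{thm:dro}, but applied to the fixed-reference shifted loss $\hat\ell(\vtheta,\cdot)=\ell(\vtheta,\cdot)-\ell(\vtheta_{\mathrm{ref}},\cdot)$ in place of $\ell(\vtheta,\cdot)$, and then to re-run the generalization argument behind \Cref{thm:dro} (that of \citet{duchi2016variance}) on the class $\mathcal F_r$. Three elementary facts drive the reduction. First, if $\ell(\vtheta,\cdot)\in[M_0,M_1]$ then $\hat\ell(\vtheta,\cdot)\in[M_0-M_1,\,M_1-M_0]$, so the shifted loss has range $2M$ instead of $M$; this is precisely why the sample-size condition becomes $C_1 n\ge 8(2M)^2=32M^2$ and the additive constant becomes $C_2=(25\rho/3+2)(2M)=(50\rho/3+4)M$. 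Second, because $\vtheta_{\mathrm{ref}}$ is fixed and data-independent, $\mathbb{E}_{\z\sim P}[\hat\ell(\vtheta,\z)]=R(\vtheta)-R(\vtheta_{\mathrm{ref}})$, and $R(\vtheta_{\mathrm{ref}})$ is a constant independent of $\vtheta$, so it cancels from both sides of the final inequality — which is why the left-hand side is the original risk $R(\tilde\vtheta_*)$ and the comparator term is $R(\vtheta)$. Third, $\mathrm{Var}(\hat\ell(\vtheta,\cdot))=\mathrm{Var}(\ell(\vtheta,\cdot)-\ell(\vtheta_{\mathrm{ref}},\cdot))$ is exactly the variance in the bound, and $\mathrm{VC}(\mathcal F_r)$ replaces $\mathrm{VC}(\mathcal F)$ (the abuse of notation already flagged).

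Concretely, I would carry out the three standard steps of the DRO generalization argument on $\mathcal F_r$. (i) \emph{Robustness certificate}: a uniform lower bound $F(\vtheta)\ge R(\vtheta)-R(\vtheta_{\mathrm{ref}})-O(\rho M/n)$ valid simultaneously over $\vtheta\in\Theta$ with probability at least $1-\delta/2$, obtained as in \citet{duchi2016variance} from the fact that $P$ lies in the $\chi^2$-ball of radius $\rho/n$ around the empirical distribution up to a VC-controlled slack. (ii) \emph{Dual upper bound}: evaluate the dual representation of the $\chi^2$ robust objective at the dual variable equal to the population mean $R(\vtheta)-R(\vtheta_{\mathrm{ref}})$ of $\hat\ell(\vtheta,\cdot)$, giving $F(\vtheta)\le R(\vtheta)-R(\vtheta_{\mathrm{ref}})+\sqrt{\tfrac{2\rho}{n}\cdot\tfrac1n\sum_{i=1}^n\big(\hat\ell(\vtheta,\z_i)-(R(\vtheta)-R(\vtheta_{\mathrm{ref}}))\big)_+^2}+O(\rho M/n)$, and then control the one-sided empirical second moment inside the root uniformly over $\mathcal F_r$ by its expectation, which is at most $\mathrm{Var}(\hat\ell(\vtheta,\cdot))$, with a slack absorbed into $C_2/n$; this is where $C_1 n\ge 32M^2$ is used, so that the empirical second moment deviates from its mean by at most a small relative amount uniformly. (iii) \emph{Chaining}: combine the certificate at $\tilde\vtheta_*$, the optimality $F(\tilde\vtheta_*)\le F(\vtheta)$ for every $\vtheta\in\Theta$, and the dual bound at $\vtheta$; cancel $R(\vtheta_{\mathrm{ref}})$; take the infimum over $\vtheta$; and collect all lower-order remainders into $C_2/n$, using $\rho\ge 9C_1$ to absorb terms of order $\sqrt{\rho C_1}\,M/n$.

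The step I expect to be the main obstacle is obtaining the variance term with coefficient exactly $1$, not the coefficient $2$ that a black-box application of \Cref{thm:dro} to a range-$2M$ loss would give (that route bounds the empirical-mean fluctuation by a Bernstein term $\sqrt{2C_1\,\mathrm{Var}(\hat\ell(\vtheta,\cdot))/n}$ and the dual-expansion term by $\sqrt{2\rho\,\mathrm{Var}(\hat\ell(\vtheta,\cdot))/n}$ and adds them). Avoiding this doubling is exactly the reason for pinning the $\chi^2$ dual variable at the \emph{population} mean of the RHO loss in step (ii): the only variance-dependent quantity then is a single one-sided empirical second moment centered at the true mean, whose expectation is already $\le\mathrm{Var}(\hat\ell(\vtheta,\cdot))$, so no separate mean-fluctuation term is incurred. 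The delicate part is then the uniform concentration of that one-sided second moment — a quantity in $[0,4M^2]$ ranging over the class $\{(\hat\ell(\vtheta,\cdot)-\mathbb{E}\,\hat\ell(\vtheta,\cdot))_+^2:\vtheta\in\Theta\}$, whose complexity is controlled through $\mathrm{VC}(\mathcal F_r)$ — arranged so that, together with the certificate error and the $O(\rho M/n)$ dual remainder, everything fits into $C_2/n=(50\rho/3+4)M/n$ under the stated conditions $\rho\ge 9C_1$ and $C_1 n\ge 32M^2$. Tracking these numerical constants is the bulk of the work.
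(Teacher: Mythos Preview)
Your proposal is essentially the paper's approach: apply the DRO generalization bound of \citet{duchi2016variance} (restated in the appendix as \Cref{thm:4}) to the shifted-loss class $\mathcal F_r$ with range $2M$ in place of $M$, and then cancel the data-independent constant $R(\vtheta_{\mathrm{ref}})$ from both sides. The paper's proof is literally that one-line black-box invocation, with the substitution $M\mapsto 2M$ producing $C_1 n\ge 32M^2$ and $C_2=(50\rho/3+4)M$ exactly as you computed.

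Your extended discussion of how to recover the variance coefficient $1$ rather than $2$ is, however, more scrupulous than the paper itself. Both \Cref{thm:dro} and the appendix \Cref{thm:4} carry the factor $2$ in front of $\sqrt{2\rho\,\mathrm{Var}(\cdot)/n}$, so a faithful black-box application to the range-$2M$ shifted loss yields $2\sqrt{2\rho\,\mathrm{Var}(\hat\ell)/n}$, not the coefficient-$1$ version stated in \Cref{thm:ref}. The paper's proof simply writes the latter without comment --- apparently a typo or an unexplained sharpening. Thus your steps (i)--(iii), which pin the $\chi^2$ dual variable at the population mean to avoid incurring a separate Bernstein mean-fluctuation term, are aimed at closing a gap the paper does not actually address; if one reads \Cref{thm:ref} with coefficient $2$ (which is what the paper's own lemma chain supports), your high-level reduction already suffices and the more delicate argument is unnecessary.
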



We present two corollaries to understand how DRRho risk minimization improves the generalization over DRO / ERM. 
\begin{corollary}   \label{cor:ref}
  Under the same setting of Theorem~\ref{thm:ref}, we have
  \vspace{-4pt}
  \begin{equation*}
    R(\tilde\vtheta_*)\leq R({\vtheta}_*) + \sqrt{\frac{2\rho}{n} \mathrm{Var}(\ell(\vtheta_*, \cdot)- \ell(\vec{\theta}_{\mathrm{ref}}, \cdot))}+ \frac{C_{2}}{n}.
  \vspace{-4pt}
  \end{equation*}
\end{corollary}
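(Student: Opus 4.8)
The plan is to obtain Corollary~\ref{cor:ref} as an immediate specialization of Theorem~\ref{thm:ref}. The theorem bounds $R(\tilde\vtheta_*)$ by $\inf_{\vtheta\in\Theta}\bigl(R(\vtheta) + \sqrt{(2\rho/n)\,\mathrm{Var}(\ell(\vtheta,\cdot)-\ell(\vtheta_{\mathrm{ref}},\cdot))}\bigr) + C_2/n$, with probability at least $1-\delta$, under the stated conditions $C_1\cdot n \ge 32M^2$ and $\rho \ge 9C_1$. Since $\vtheta_* = \argmin_{\vtheta\in\Theta} R(\vtheta)$ is one particular element of $\Theta$, the infimum over $\Theta$ on the right-hand side is no larger than the value of the same expression evaluated at $\vtheta = \vtheta_*$.

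Concretely, first I would invoke Theorem~\ref{thm:ref} verbatim (the hypotheses are identical, so the constants $C_1$, $C_2$ and the conditions carry over unchanged). Then I would use the elementary fact that $\inf_{\vtheta\in\Theta} g(\vtheta) \le g(\vtheta_*)$ for $g(\vtheta) := R(\vtheta) + \sqrt{(2\rho/n)\,\mathrm{Var}(\ell(\vtheta,\cdot)-\ell(\vtheta_{\mathrm{ref}},\cdot))}$. Chaining these two inequalities gives $R(\tilde\vtheta_*) \le R(\vtheta_*) + \sqrt{(2\rho/n)\,\mathrm{Var}(\ell(\vtheta_*,\cdot)-\ell(\vtheta_{\mathrm{ref}},\cdot))} + C_2/n$, which is exactly the claimed bound.

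There is essentially no obstacle here: the corollary is just Theorem~\ref{thm:ref} with the infimum dropped in favor of the distinguished point $\vtheta_*$. The only thing worth a line of comment is that this substitution is what makes the improvement over DRO/ERM transparent — the variance appearing is $\mathrm{Var}(\ell(\vtheta_*,\cdot)-\ell(\vtheta_{\mathrm{ref}},\cdot))$ rather than $\mathrm{Var}(\ell(\vtheta_*,\cdot))$, and if the reference model is well correlated with the loss landscape near $\vtheta_*$ this difference has strictly smaller variance, tightening the excess-risk bound relative to the $\sqrt{(2\rho/n)\,\mathrm{Var}(\ell(\vtheta_*,\cdot))}$ term implied by Theorem~\ref{thm:dro}.
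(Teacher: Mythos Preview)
Your proposal is correct and matches the paper's own proof essentially line for line: the paper also simply notes that $\vtheta_*\in\Theta$, bounds the infimum in Theorem~\ref{thm:ref} by its value at $\vtheta_*$, and reads off the corollary. There is nothing to add.
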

{\bf Remark:} Comparing the excess risk bound of $ R(\tilde\vtheta_*) - R({\vtheta}_*)$ for  DRRho with that of $ R(\hat\vtheta_*) - R({\vtheta}_*)$ for DRO, we can see that the variance term changes to $ \mathrm{Var}(\ell(\vtheta_*, \cdot)- \ell(\vec{\theta}_{\mathrm{ref}}, \cdot))$ from  $ \mathrm{Var}(\ell(\vtheta_*, \cdot))$. It is reasonable to assume that the reference model $\vtheta_{\mathrm{ref}}$ is sufficiently trained such that $\ell(\vtheta_{\mathrm{ref}}, \cdot)$ has a similar distribution to $\ell(\vtheta_*, \cdot)$;  hence we expect that   $ \mathrm{Var}(\ell(\vtheta_*, \cdot)- \ell(\vec{\theta}_{\mathrm{ref}}, \cdot))$ would be much smaller than $ \mathrm{Var}(\ell(\vtheta_*, \cdot))$.

The following corollary provides insights into the reduced sample complexity of DRRho risk minimization in achieving the same level of generalization as the reference model. 
\begin{corollary}   \label{cor:ref_same_family}
 Under the same setting of Theorem~\ref{thm:ref}, and assume \(\vec{\theta}_{\mathrm{ref}}\in \Theta\), we have
\vspace{-4pt}
  \begin{equation*}
     R(\tilde\vtheta_*) - R(\vtheta_*)\leq R(\vec{\theta}_{\mathrm{ref}}) - R(\vtheta_*)+ \frac{C_{2}}{n}.
  \end{equation*}
\end{corollary}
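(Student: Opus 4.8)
The plan is simply to invoke Theorem~\ref{thm:ref} and then specialize the infimum over $\Theta$ to the single point $\vtheta = \vtheta_{\mathrm{ref}}$. This choice is admissible precisely because of the added hypothesis $\vtheta_{\mathrm{ref}} \in \Theta$. The key observation is that at this point the RHO loss degenerates: $\ell(\vtheta_{\mathrm{ref}}, \cdot) - \ell(\vtheta_{\mathrm{ref}}, \cdot) \equiv 0$, so $\mathrm{Var}(\ell(\vtheta_{\mathrm{ref}}, \cdot) - \ell(\vtheta_{\mathrm{ref}}, \cdot)) = 0$ and the square-root term in the bound of Theorem~\ref{thm:ref} vanishes entirely.

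Concretely, from Theorem~\ref{thm:ref} we have
\begin{equation*}
R(\tilde{\vtheta}_*) \leq \inf_{\vtheta\in\Theta}\left( R(\vtheta) + \sqrt{\tfrac{2\rho}{n}\,\mathrm{Var}\!\left(\ell(\vtheta,\cdot) - \ell(\vtheta_{\mathrm{ref}},\cdot)\right)} \right) + \frac{C_2}{n} \leq R(\vtheta_{\mathrm{ref}}) + \frac{C_2}{n},
\end{equation*}
where the second inequality plugs in $\vtheta = \vtheta_{\mathrm{ref}}$. Subtracting $R(\vtheta_*)$ from both sides yields the stated bound $R(\tilde{\vtheta}_*) - R(\vtheta_*) \leq R(\vtheta_{\mathrm{ref}}) - R(\vtheta_*) + C_2/n$.

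There is essentially no obstacle here beyond a small bookkeeping check: since the corollary is stated ``under the same setting of Theorem~\ref{thm:ref},'' the structural assumptions ($\ell(\vtheta,\cdot) \in [M_0, M_1]$, $C_1 n \ge 32 M^2$, $\rho \ge 9 C_1$, and the $\chi^2$-divergence choice) are inherited verbatim, so the high-probability event on which Theorem~\ref{thm:ref} holds is exactly the one on which the corollary holds, with the same $\delta$. No further probabilistic argument or concentration bound is needed; the content of the corollary is entirely the variance-cancellation at $\vtheta_{\mathrm{ref}}$, which expresses that model steering with a reference model automatically attains (up to the $C_2/n$ term) whatever excess risk the reference model itself has.
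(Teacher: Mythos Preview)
Your proposal is correct and follows essentially the same approach as the paper: plug $\vtheta = \vtheta_{\mathrm{ref}}$ into the infimum of Theorem~\ref{thm:ref}, observe that the variance term vanishes, and subtract $R(\vtheta_*)$ from both sides.
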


{\bf Remark:} The above result allows us to compare the excess risk of DRRho risk minimizer $\tilde\vtheta_*$ with that of a reference model $\vtheta_{\mathrm{ref}}\in\Theta$ that is from the same family. Suppose that reference model $\vtheta_{\mathrm{ref}}$ is learned using ERM on a dataset of $m$ samples. A standard generalization error analysis~\citep{boucheron2005theory} would yield an excess risk bound on the level of $\mathcal{O}(\sqrt{1/m})$, i.e., $R(\vec{\theta}_{\mathrm{ref}}) - R(\vtheta_*)= \mathcal{O}(1/\sqrt{m})$. In order to reach the same level of generalization error of the reference model, DRRho needs only $n = \mathcal{O}(\sqrt{m})$ samples, which dramatically reduces the sample complexity $\mathcal{O}(m)$ of ERM without a reference model. 

Before ending this section,  we discuss how DRRho risk minimization provides a foundation for existing heuristic approaches for data selection, weighting, and sampling based on the RHO loss, and for inducing better practices. 

{\bf Data Selection:} When we use the CVaR divergence $\phi(t) = 1$ if $t\leq n/k$, and $\phi(t) = \infty$ otherwise, the DRRho risk becomes the average of top-$k$ RHO losses:
\begin{equation}
    \label{eqn:drrho1}
    F(\vtheta): =\frac{1}{k} \sum_{i=1}^k \ell(\vec{\theta}, \z_{[i]}) - \ell(\vec{\theta}_{\mathrm{ref}}, \z_{[i]}).
\end{equation}
where $\z_{[i]}$ denotes the data whose RHO loss is ranked at the $i$-th position in descending order. Existing studies have applied RHO-loss-based data selection to the mini-batch for simplicity~\cite{mindermann2022prioritized,lin2024not}, which do not necessarily optimize the average of top-$k$ RHO loss among the whole dataset as in our framework. 

{\bf Data Weighting / Sampling:}
If we use the KL-divergence $\text{KL}(\p, \vec{1}/n) = \sum_{i=1}^np_i \log (p_in)$, where $\phi(t) =t\log t-t + 1$, then from the Lagrange dual theory we can derive:
\begin{align}\label{eqn:drrho2}
F(\vtheta): = \min_{\tau\geq 0}& \;\tau \log\bigg(\frac{1}{n}\sum_{i=1}^n\exp(\frac{(\ell(\vec{\theta}, \z_i) - \ell(\vec{\theta}_{\mathrm{ref}}, \z_i)}{\tau})\bigg)  \notag\\
& + \tau \rho/n.
\end{align}
If we compute the gradient of the above objective in terms of $\vtheta$ given $\tau$, we obtain
$\sum_{i=1}^n p_i \nabla_{\vtheta}\ell(\vec{\theta}, \z_i)$, where 
\begin{align*}
p_i = \frac{\exp\bigg(\frac{\ell(\vec{\theta}, \z_i) - \ell(\vec{\theta}_{\mathrm{ref}}, \z_i)}{\tau}\bigg)}{\sum_{j=1}^n\exp\bigg(\frac{\ell(\vec{\theta}, \z_{j}) - \ell(\vec{\theta}_{\mathrm{ref}}, \z_j)}{\tau}\bigg)}.
\end{align*}
Hence, the above DRRho risk acts like assigning different data different weights such that data with a larger RHO loss has a higher weight in the gradient calculation. Heuristic approaches have implemented this idea by  sampling data in a large batch following $\p_i$ calculated based on the mini-batch data to create a smaller batch~\cite{evans2024data,10.1007/978-3-031-72643-9_16}.

To simplify the complexity of optimization, one can turn $\tau$ into a hyperparameter, which is equivalent to using a KL divergence as regularization in defining the DRRho risk: 
\vspace{-7pt}
    
    \begin{align}
        F(\vtheta)& =\sup_{\p\in\Delta} \sum_{i=1}^np_i(\ell(\vec{\theta}, \z_i) - \ell(\vec{\theta}_{\mathrm{ref}}, \z_i)) - \tau\text{KL}(\p, \vec{1}/n)\notag \\
        & = \tau \log\left(\frac{1}{n}\sum_{i=1}^n\exp(\frac{(\ell(\vec{\theta}, \z_i) - \ell(\vec{\theta}_{\mathrm{ref}}, \z_i)}{\tau})\right).\label{eqn:drrho3}
    \end{align}
Finally, we would like to mention that efficient stochastic algorithms have been developed to optimize the objectives in \Cref{eqn:drrho1,eqn:drrho2,eqn:drrho3}~\citep{DBLP:journals/tmlr/0006XY0Y23,DBLP:journals/tmlr/0006LCBY23,DBLP:journals/corr/abs-2312-02277,DBLP:conf/nips/HuZY23}. Hence, solving the proposed DRRho risk minimization with these advanced optimization algorithms could yield better guarantees than heuristic approaches in existing studies. We illustrate this with an application of CLIP training in next section.

\section{DRRho-CLIP with a Reference Model}
\label{sec:contrastive_learning}

In this section, we explore the application of the DRRho risk to CLIP training~\citep{radford2021learning} with a reference model. Although the proposed DRRho risk minimization framework is general and can be applied to training various models, we focus on CLIP training for several reasons: (i) CLIP involves more complex data structures, including anchor data and their corresponding negative samples for contrastive learning. Without guidance from theory, heuristic methods that sample data based on the RHO loss may not give optimal performance~\cite{evans2024data}. (ii) The publicly available OpenAI's CLIP model has already been utilized to enhance CLIP training by filtering high-quality data based on the CLIP score~\cite{gadre2023datacomp}. This raises an intriguing question: can our framework deliver additional improvements when using OpenAI's CLIP model as a reference on the filtered data?

For CLIP, the training dataset consists of images and their corresponding text descriptions. We use \(\vec{x}\) to denote an image and \(\vec{y}\) to denote a text, and use \(\vec{z}= (\vec{x}, \vec{y})\) to denote a pair. We use \(\mathcal{S}= \{(\vec{x}_{1}, \vec{y}_{1}), \ldots, (\vec{x}_{n}, \vec{y}_{n})\}\) to denote a training set of size \(n\). Given an image \(\vec{x}_{i}\), let  \(\vec{e}_{1, i}= \vec{e}_1(\vec{\theta}_1, \vec{x}_{i})\in \mathbb{R}^{d}\) denote image representation by an image encoder with parameter $\vtheta_1$. Similarly, \(\vec{e}_{2, i}= \vec{e}_2(\vec{\theta}_2, \vec{y}_{i})\in \mathbb{R}^{d}\) denotes the embedding of text \(\vec{y}_{i}\) by the text encoder with parameter $\vtheta_2$. Let  $\vtheta=(\vtheta_1, \vtheta_2)$ denote the parameters of the image encoder and the text encoder jointly. Let \(\mathcal{S}_{i-}= \mathcal{S}\backslash \{i\}\) denote the dataset without $i$-th pair. Let \(s(\x_i, \y_j)\) denote the cosine similarity between \(i\)-th image embedding \(\vec{e}_{1, i}\) and \(j\)-th text embedding \(\vec{e}_{2, j}\).


In order to apply DRRho risk to CLIP, we leverage the connection between a contrastive loss and DRO~\cite{qiu2023not}. For simplicity of our presentation, we consider the KL-regularized DRO formulation~(\ref{eqn:drrho3}), which yields an objective with a tunable temperature parameter $\tau$. Similar extensions can be made to KL-constrained DRO~(\ref{eqn:drrho2}) for learnable temperature (cf. \Cref{sec:app_exp_results:ablation}).  First we define the contrastive loss function for each anchor image data $\x_i$ without using a reference model. To this end, we define a pairwise loss $\ell(\vtheta, \x_i, \y_j) =s(\x_i, \y_j)  - s(\x_i, \y_i)$, which measures the gap of similarities between a negative pair $(\x_i, \y_j)$  and the positive pair $(\x_i, \y_i)$ of the anchor data $\x_i$. Then we use DRO to aggregate these  individual pairwise losses for $\y_j\in\mathcal S$ into the following loss of $\x_i$:
\begin{align}    \label{eq:rgcl_dro}
    F_{\text{dro}}(\vtheta, \x_i, \mathcal{S})&:= \max_{\vec{p}\in \Delta} \sum_{j=1}^n p_{j} \ell(\vec{\theta}, \vec{x}_{i}, \vec{y}_{j})- \tau\text{KL}(\vec{p}, \vec{1}/n), \notag\\
    & = \tau \log \left(\frac{1}{n}\sum_{j=1}^n \exp(\frac{\ell(\vec{\theta}, \vec{x}_{i}, \vec{y}_{j})}{\tau})\right).
\end{align}
To apply DRRho risk for integrating a reference model, we  plug the following shifted loss into the above formulation: 
\vspace{-3pt}
\begin{equation*}
    \hat{\ell}(\vec{\theta}, \vec{\theta}_{\mathrm{ref}}, \vec{x}_{i}, \vec{y}_{j}):= \ell(\vec{\theta}, \vec{x}_{i}, \vec{y}_{j})- \ell(\vec{\theta}_{\mathrm{ref}}, \vec{x}_{i}, \vec{y}_{j}).
\vspace{-3pt}
\end{equation*}
As a result, we obtain the following DRRho contrastive loss for each image $\x_i$:
\vspace{-3pt}
\begin{equation}
\label{eq:drrho_clip:image}
    F(\vtheta, \x_i, \mathcal{S}) = \tau \log \bigg(\frac{1}{n}\sum_{j=1}^n \exp(\frac{ \hat{\ell}(\vec{\theta}, \vec{\theta}_{\mathrm{ref}}, \vec{x}_{i}, \vec{y}_{j})}{\tau})\bigg).
\vspace{-3pt}
\end{equation}
Similarly, we define a DRRho contrastive loss for text $\y_i$: 
\vspace{-3pt}
\begin{equation}
\label{eq:drrho_clip:text}
    F(\vtheta, \y_i, \mathcal{S}) = \tau \log \bigg(\frac{1}{n}\sum_{j=1}^n \exp(\frac{ \hat{\ell}(\vec{\theta}, \vec{\theta}_{\mathrm{ref}}, \vec{y}_{i}, \vec{x}_{j})}{\tau})\bigg)
\vspace{-3pt}
\end{equation}
Then, we solve the following optimization problem: 
\vspace{-3pt}
\begin{equation*}
    \min_{\vtheta} \frac{1}{n}\sum_{i=1}^n(F(\vtheta, \x_i, \mathcal{S})+F(\vtheta, \y_i, \mathcal{S})).
\vspace{-3pt}
\end{equation*}
To optimize the above objective, we use the SogCLR algorithm~\cite{yuan2022provable} that has a provable convergence guarantee without using a large batch size. In particular, the algorithm maintains two sequences of estimators $u_{1,i}^t, u_{2,i}^t, t=1,\ldots, T$ to track the inner average in the log function of $F(\vtheta, \x_i, \mathcal{S})$ and $F(\vtheta, \y_i, \mathcal{S})$. At $t$-th iteration, the following update with mini-batch $\mathcal B^t$ are executed: 
\begin{equation}    \label{eq:lrclip_u}
    \begin{aligned}
        & u_{1, i}^{t+ 1}= (1- \gamma_{t})u_{1, i}^{t}+ \gamma_{t} \mathbb{E}_{j\in \mathcal{B}^{t}_{i-}} \exp(\frac{\hat{\ell}_{1}(\vec{\theta_t}, \vec{\theta}_{\mathrm{ref}}, \vec{x}_{i}, \vec{y}_{j})}{\tau}), \\
        & u_{2, i}^{t+ 1}= (1- \gamma_{t})u_{2, i}^{t}+ \gamma_{t} \mathbb{E}_{j\in \mathcal{B}^{t}_{i-}} \exp(\frac{\hat{\ell}_{2}(\vec{\theta_t}, \vec{\theta}_{\mathrm{ref}}, \vec{y}_{i}, \vec{x}_{j})}{\tau}),
    \end{aligned}
\end{equation}
where $\mathcal{B}^{t}_{i-}=\mathcal B^t\setminus\{i\}$ and  $\mathbb{E}_{j\in \mathcal{B}^{t}_{i-}}$ denotes the average over data in $\mathcal{B}^{t}_{i-}$, and $\gamma_t$ is regarded as an inner learning rate. Then we compute the gradient estimators by
\begin{equation}    \label{eq:lrclip_grad}
    \begin{aligned}
        G_{1}^{t}= \mathbb{E}_{i\in \mathcal{B}^{t}} \frac{\tau}{\varepsilon+ u_{1, i}^{t+ 1}}\cdot \nabla_{\vec{\theta}} \mathbb{E}_{j\in \mathcal{B}^{t}_{i-}} \exp(\frac{\hat{\ell}_{1}(\vec{\theta}_{t}, \vec{\theta}_{\mathrm{ref}}, \vec{x}_{i}, \vec{y}_{j})}{\tau}), \\
        G_{2}^{t}= \mathbb{E}_{i\in \mathcal{B}^{t}} \frac{\tau}{\varepsilon+ u_{2, i}^{t+ 1}}\cdot \nabla_{\vec{\theta}} \mathbb{E}_{j\in \mathcal{B}^{t}_{i-}} \exp(\frac{\hat{\ell}_{2}(\vec{\theta}_{t}, \vec{\theta}_{\mathrm{ref}}, \vec{y}_{i}, \vec{x}_{j})}{\tau}).
    \end{aligned}
\end{equation}
where $\varepsilon$ is treated as a hyperparameter in practice~\cite{wei2024fastclip}. We present the details of our algorithm in \Cref{alg:lrclip}, which is referred to as DRRho-CLIP.
\setlength{\textfloatsep}{7pt}

\begin{algorithm}[t]
    \caption{DRRho-CLIP}
    \label{alg:lrclip}
    \begin{algorithmic}[1]
        \STATE {\bfseries Input:} Model \(\vec{\theta}^{0}, \tau^{0}\), sequences \(\{u_{1, i}^{0}\}, \{u_{2, i}^{0}\}\), number of iterations \(T\).
        \FOR{\(t= 0, \ldots, T- 1\)}
            \STATE Sample a batch \(\mathcal{B}^{t}\subset \mathcal{S}\) and compute features
            \STATE Update \(u_{1, i}^{t+ 1}, u_{2, i}^{t+ 1}\) using (\ref{eq:lrclip_u}) for all \(i\in \mathcal{B}^{t}\)
            \STATE Set \(u_{1, i}^{t+ 1}= u_{1, i}^{t}, u_{2, i}^{t+ 1}= u_{2, i}^{t}\) for \(i\notin \mathcal{B}^{t}\)
            \STATE Compute \(G_{1}^{t}, G_{2}^{t}\) using \Cref{eq:lrclip_grad}
            \STATE Update \(\vec{\theta}_{t+ 1}\) using \(G_{1}^{t}+ G_{2}^{t}\) as a gradient estimator with an optimizer (e.g., AdamW)
        \ENDFOR
    \end{algorithmic}
\end{algorithm}

{\bf Efficient Implementation:} It is notable that, like other LEAR methods, DRRho-CLIP requires computing the loss of the reference model on the training data. It is expensive to compute these losses on the fly. To reduce this cost, we compute the embedding vectors of training data by the reference model in an offline manner. During training, we load the pre-computed features of the current mini-batch from the disk and compute the loss based on the pre-computed features. Similar approaches have been used in \cite{vasu2024mobileclip,evans2024data}.

\section{Experiments}
\label{sec:experiments}

\begin{figure*}
    \centering
    \includegraphics[width=0.7\linewidth]{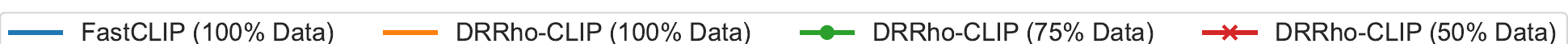}
    \vspace{3pt}

    \begin{subfigure}{0.3\textwidth}
        \includegraphics[width=\textwidth]{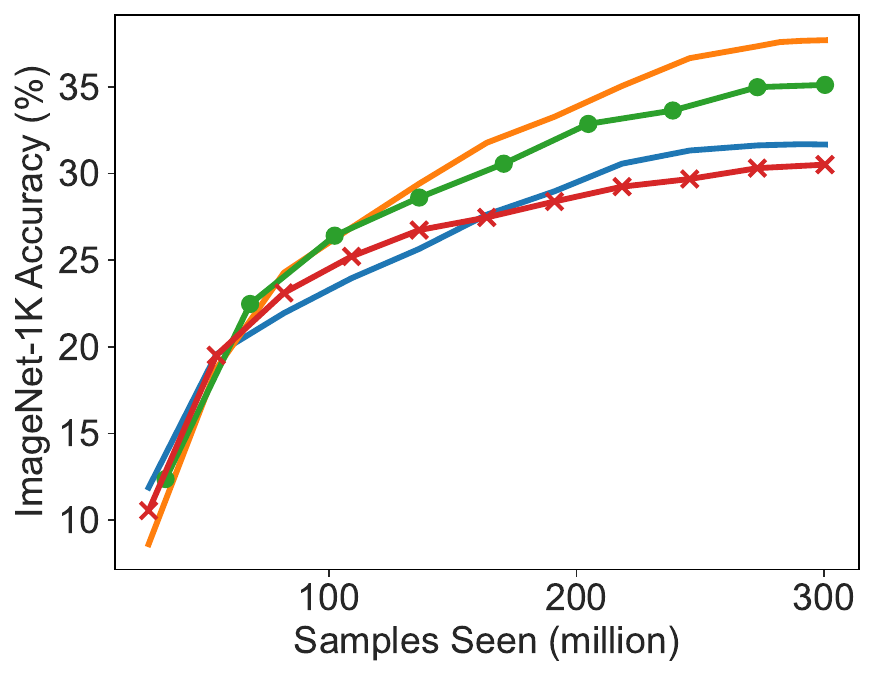}
    \end{subfigure}
    \hfill
    \begin{subfigure}{0.3\textwidth}
        \includegraphics[width=\textwidth]{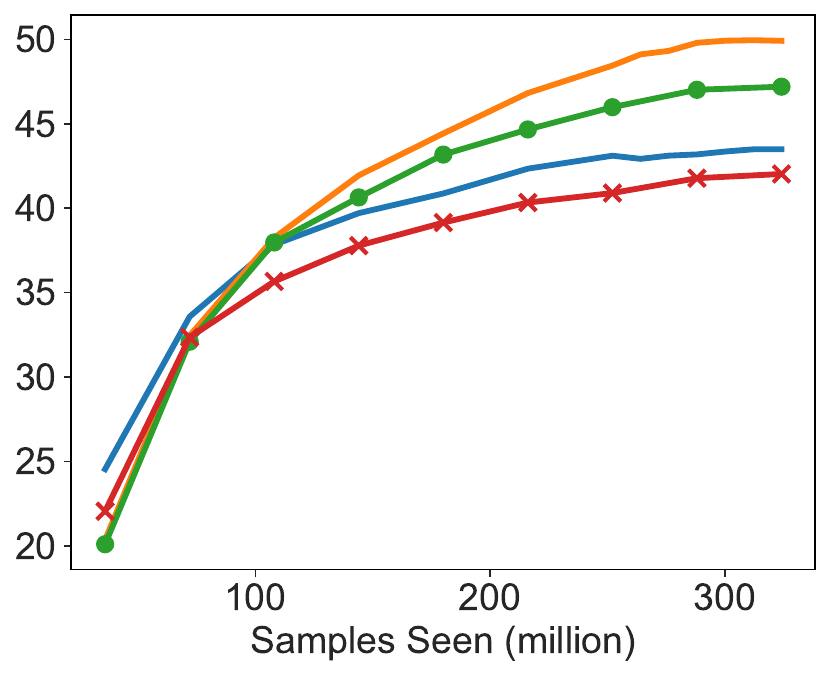}
    \end{subfigure}
    \hfill
    \begin{subfigure}{0.3\textwidth}
        \includegraphics[width=\textwidth]{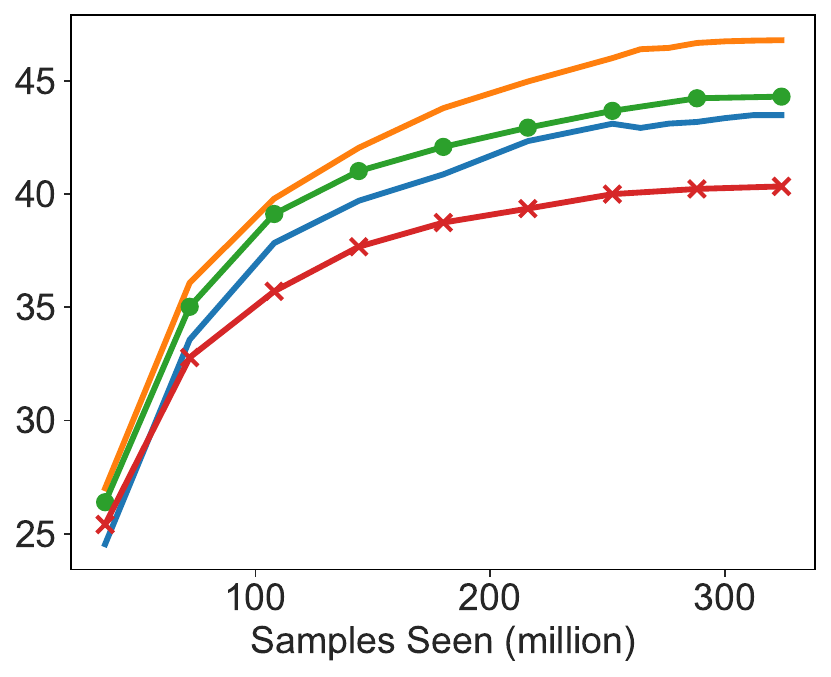}
    \end{subfigure}

    \begin{subfigure}{0.3\textwidth}
        \includegraphics[width=\textwidth]{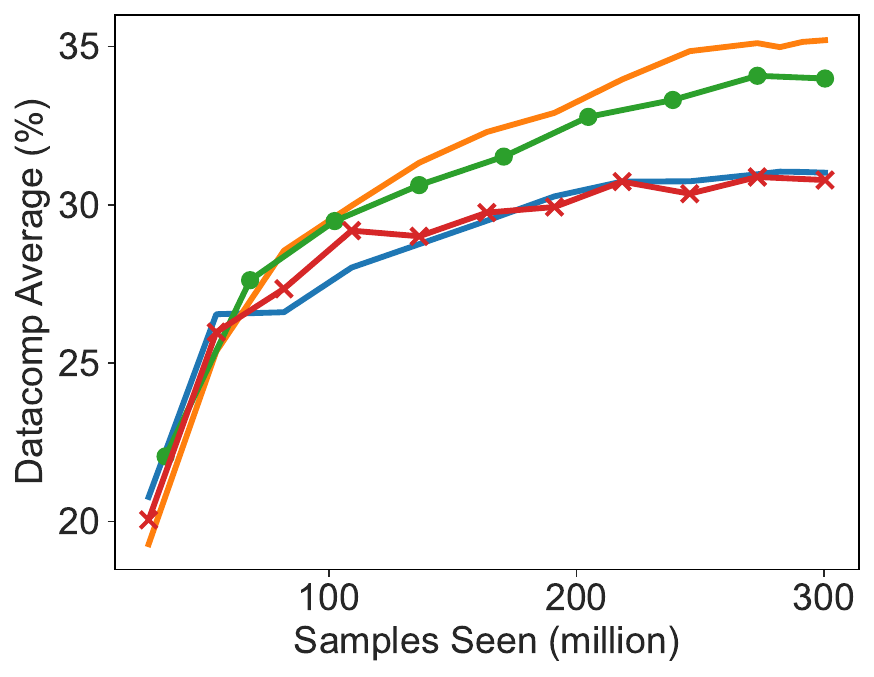}
        \caption{Target: \;\;\;\;\, ViT-B/32 (CC12M) \\ \;\;\;\, Reference: ViT-B/32 (WIT-400M)}
    \end{subfigure}
    \hfill
    \begin{subfigure}{0.3\textwidth}
        \includegraphics[width=\textwidth]{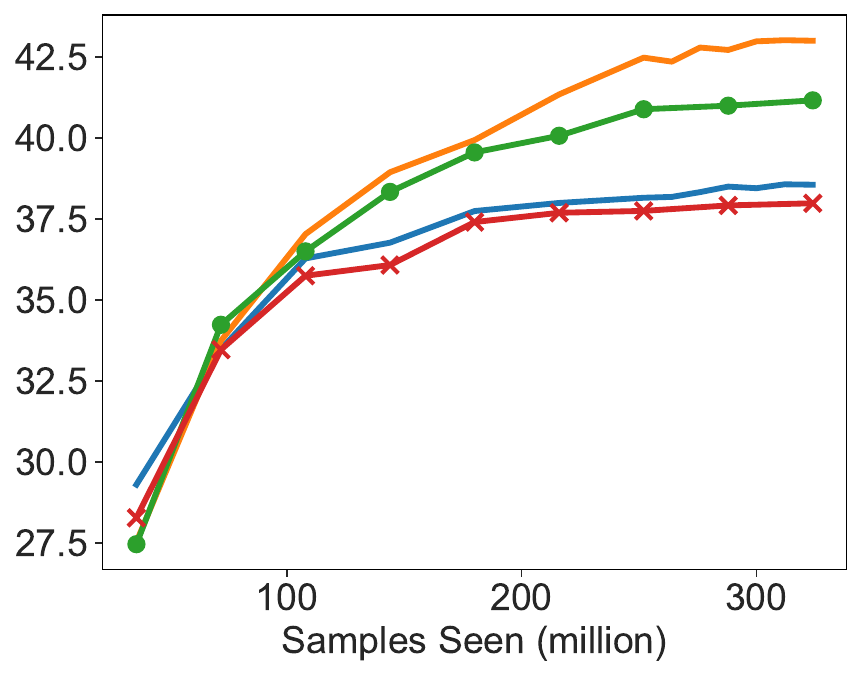}
        \caption{Target: \;\;\;\;\, ViT-B/16 (DFN-12M) \\ \, Reference: ViT-B/32 (WIT-400M)}
    \end{subfigure}
    \hfill
    \begin{subfigure}{0.3\textwidth}
        \includegraphics[width=\textwidth]{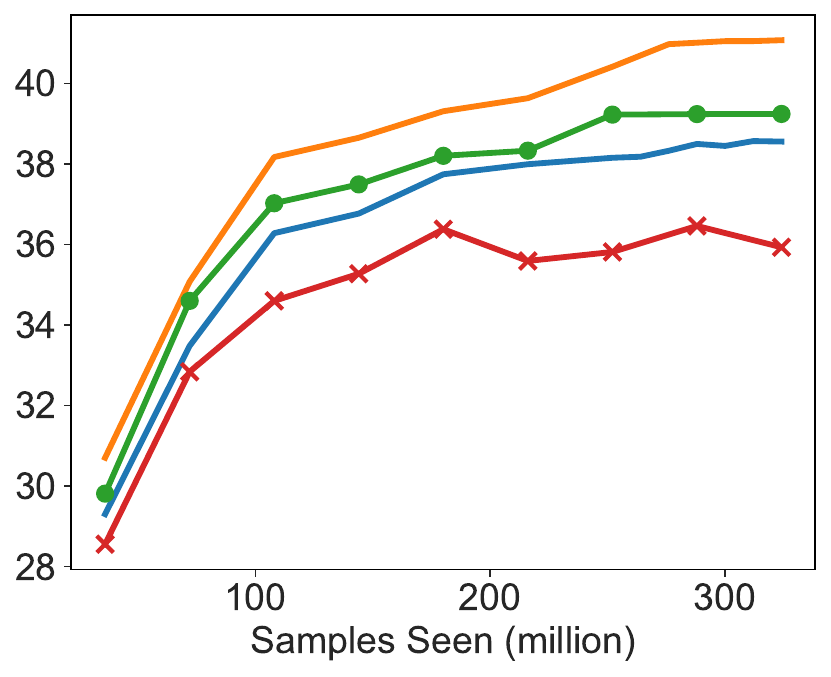}
        \caption{\, Target: \;\;\;\;\, ViT-B/16 (DFN-12M) \\ Reference: ViT-B/16 (DFN-9M)}
    \end{subfigure}
    \caption{Performance curves of FastCLIP and DRRho-CLIP with different target and reference models (with \textbf{each column} representing one combination). \textbf{Top row}: ImageNet Top-1 accuracy, \textbf{bottom row}: Datacomp average performance.}
    \label{fig:results_efficient}
    \vspace*{-0.1in}
\end{figure*}

In this section, we conduct experiments to demonstrate the superiority of our proposed framework, where we focus on training CLIP models. First, we empirically verify claims of our theory: (1) our framework is more data-efficient than learning without a reference model, and (2) the variance of the RHO loss in the excess risk bound of our framework is lower than that of a regular loss. Next, we compare DRRho-CLIP with other baselines, and we also show that DRRho-CLIP can be seamlessly integrated with distillation methods. Finally, we study the scaling law of DRRho-CLIP and show that it has a better scaling trend than OpenCLIP.
\vspace{-10pt}
\begin{itemize}[leftmargin=*]
    \setlength\itemsep{-2pt}
    \item \textbf{Data}: The training datasets we use consist of CC12M (9M samples)~\cite{changpinyo2021cc12m}, DFN-192M (192M samples)~\cite{fang2024data} and DFN-12M (a 12M subset selected from DFN-192M).
    \item \textbf{Models}: 
    The reference model is either ViT-B/32, ViT-B/16 or ViT-L/14, which are either pretrained open-weight models or models we trained from scratch. For ease of presentation, we use ``model (data)'' to denote a model and the data on which it is pretrained. The following reference models are pretrained open-weight models: ViT-B/32 (WIT-400M)~\citep{radford2021learning}, ViT-B/16 (DFN-2B) and ViT-L/14 (DFN-2B)~\citep{fang2024data}. The target model trained from scratch could be  ViT-B/32 or ViT-B/16. 
    \item \textbf{Metrics}: We leverage the Datacomp benchmark~\citep{gadre2023datacomp} to evaluate the performance of target models, which comprises 38 zero-shot classification and retrieval tasks. The numbers we report are the ImageNet-1K Top 1 accuracy and the average performance on the 38 tasks.
    \item \textbf{Hardware and Training framework}: Unless otherwise specified, we use a  batch size of 5120 on 8 H100 GPUs for training ViT-B/16, and use a batch size of 4096 on 8 A100 GPUs for training ViT-B/32. We implement our method using the codebase of FastCLIP distributed training framework~\cite{wei2024fastclip}. For hyperparameter tuning, we refer readers to Appendix~\ref{app:hyper}. 
\end{itemize}

\subsection{Empirical Verification of Theoretical Results}
\label{sec:experiments:verification}


\textbf{DRRho-CLIP is more data-efficient}. \Cref{cor:ref_same_family} implies that DRRho requires less amount of training data than learning without a reference model to achieve the same level of excess risk.
To empirically verify this theory, we run DRRho-CLIP on different portions of the training dataset (100\%, 75\% and 50\%) and compare it with the baseline FastCLIP~\citep{wei2024fastclip} on the whole training dataset (100\%). Multiple target / reference model combination are tested, as specified in the following table.

\begin{table}[htbp]
    \footnotesize
    \centering
    \begin{tabular}{ll}
        \toprule
        Target Model (Data) & Reference Model (Data) \\
        \midrule
        ViT-B/32 (CC12M) & ViT-B/32 (WIT-400M) \\
        ViT-B/16 (DFN-12M) & ViT-B/32 (WIT-400M) \\
        ViT-B/16 (DFN-12M) & ViT-B/16 (DFN-9M) \\
        \bottomrule
    \end{tabular}
    \label{tab:data_efficiency}
\end{table}
The performance curves of different methods on different metrics are plotted in \Cref{fig:results_efficient}. From the results we can observe that (i) with  OpenAI's ViT-B/32 (WIT-400M) as the reference model, DRRho-CLIP  with 50\% of training data can achieve comparable performance to FastCLIP on the whole training data (\Cref{fig:results_efficient}a, b); (ii)  with 100\% training data, DRRho-CLIP outperforms FastCLIP by a large margin; (iii) even with a weaker reference model  ViT-B/16 (DFN-9M) (\Cref{fig:results_efficient}c) trained by us using FastCLIP on a 9M subset of DFN-12M, DRRho-CLIP still benefits from it, achieving better performance with 75\% data (9M) than FastCLIP trained on DFN-12M without a reference model.


{\footnotesize
\begin{table*}[htbp]
  \centering
  \caption{Comparison of different methods on DFN-192M with 1.28B samples seen. Reference denotes the performance of the reference model. OpenCLIP and FastCLIP does not leverage a reference model. For distillation-based methods, the reference model is also the teacher model for distillation.}
  \vspace*{-0.1in}\begin{tabular}{clccc}
    \toprule
    & & \multicolumn{3}{c}{Reference Model} \\
    \multirow{-2}{*}{Metric} & \multirow{-2}{*}{Method} & ViT-B/32 (WIT-400M) & ViT-B/16 (DFN-2B) & ViT-L/14 (DFN-2B) \\
    \midrule
    & Reference & 63.32 & 76.23 & 81.41 \\
    \cline{2-5}
    & OpenCLIP & 66.94 & 66.94 & 66.94 \\
    & FastCLIP & 67.37 & 67.37 & 67.37 \\
    \cline{2-5}
    & JEST & 56.40 & 56.56 & 55.96 \\
    \multirow{-2}{*}{ImageNet} & JEST (Top-k) & 57.75 & 57.34 & 56.96 \\
    \multirow{-2}{*}{Top 1} & DRRho-CLIP & \textbf{68.84} & \textbf{69.19} & \textbf{68.69} \\
    \cline{2-5}
    & MobileCLIP (w/ Distillation) & 66.94 & 68.67 & 68.47 \\
    & FastCLIP (w/ Distillation) & 67.33 & 69.15 & 68.85 \\
    & DRRho-CLIP (w/ Distillation) & \textbf{68.84} & \textbf{69.50} & \textbf{69.25} \\
    \midrule
    & Reference & 52.27 & 60.75 & 66.65 \\
    \cline{2-5}
    & OpenCLIP & 54.58 & 54.58 & 54.58 \\
    & FastCLIP & 54.69 & 54.69 & 54.69 \\
    \cline{2-5}
    & JEST & 48.25 & 48.97 & 48.27 \\
    Datacomp & JEST (Top-k) & 48.26 & 49.22 & 48.67 \\
    & DRRho-CLIP & \textbf{55.20} & \textbf{55.48} & \textbf{54.74} \\
    \cline{2-5}
    & MobileCLIP (w/ Distillation) & 54.58 & 55.21 & 55.31 \\
    & FastCLIP (w/ Distillation) & 54.69 & 55.60 & 55.91 \\
    & DRRho-CLIP (w/ Distillation) & \textbf{55.20} & \textbf{57.17} & \textbf{56.29} \\
    \bottomrule
  \end{tabular}
  \vspace*{-0.1in}
  \label{tab:baselines_dfn192m}
\end{table*}
}

\textbf{DRRho-CLIP has a lower variance of RHO loss}. From \Cref{cor:ref} and the remark below we know that the main difference between the excess risk bound of DRO and that of DRRho lies in the variance term ($ \mathrm{Var}(\ell(\vtheta_*, \cdot))$ for DRO and $ \mathrm{Var}(\ell(\vtheta_*, \cdot)- \ell(\vec{\theta}_{\mathrm{ref}}, \cdot))$ for DRRho). To have a better understanding of the two terms, we train a ViT-B/16 target model on DFN-12M with 320M samples seen using FastCLIP and DRRho-CLIP with a reference model ViT-B/32 (WIT-400M). Then we select a 200K subset from the training data and compute the variance of the original pairwise loss in FastCLIP and the RHO loss in DRRho-CLIP w.r.t. each image and text. We use the trained model for computing the variance of the RHO loss and the original.

\begin{table}[H]
  \footnotesize
  \centering
  \begin{tabular}{cccc}
    \toprule
     & \multicolumn{2}{c}{Loss Variance (\(\times 10^{-3}\))} & \\
    \multirow{-2}{*}{Ref. Model} & Image & Text & \multirow{-2}{*}{DC} \\
    \midrule
    No Ref. & 7.26 (0.58) & 7.02 (0.91) & 38.57 \\
    ViT-B/32 (WIT-400M) & 4.49 (0.54) & 4.09 (0.60) & 43.02 \\
    \bottomrule
  \end{tabular}
  \vspace*{-0.1in}
\end{table}
We report the mean and standard deviation of the variance above, along with the Datacomp (DC) average performance of both methods.
The results show that with a reference model, the variance of the RHO loss is lower than that of the original loss.

\subsection{Comparison with Baselines}
\label{sec:experiments:baselines}

In this section, we compare our method with other baselines of learning with reference models and knowledge distillation. For the former, we compare with JEST \cite{evans2024data}, which is a heuristic approach of applying the RHO loss for data sampling. In particular, they first sample image-text pairs from a large batch according to their similarities, and then compute an averaged RHO loss for each remaining data using the selected data in previous step as negative data and then perform sampling. We also implement another variant that chooses the top pairs in the remaining data based on their averaged RHO loss in the second step, which is referred to as JEST (Top-$k$). Then a mini-batch contrastive loss is computed based on the selected data for updating the model parameter.  These methods are implemented to select the same size of mini-batch samples as our method from a 5 times larger super-batch.

For knowledge distillation, we compare with  MobileCLIP~\citep{vasu2024mobileclip}, which optimizes:
\begin{equation}    \label{eq:distill}
    \min_{\vec{\theta}\in \Theta} \;(1- \lambda)\mathcal{L}_{\mathrm{con}}(\vec{\theta})+ \lambda \mathbb{E}_{\mathcal{B}\subset \mathcal{S}}\mathcal{L}_{\mathrm{dist}}(\vec{\theta}, \vec{\theta}_{\mathrm{ref}}, \mathcal{B}).
\end{equation}
where \(\lambda\in [0, 1]\) is a hyperparameter, \(\mathcal{L}_{\mathrm{con}}\) denotes a mini-batch contrastive loss and \(\mathcal{L}_{\mathrm{dist}}\) is the distillation loss between the target model and reference model (cf. \Cref{eq:distillation} in \Cref{sec:app_exp_results}). To demonstrate the benefit of our approach integrated with knowledge distillation, we replace the contrastive loss above with our DRRho contrastive loss, which is referred to as DRRho-CLIP (w/Distillation). For comparison, we also implement another baseline FastCLIP (w/Distillation) which uses the global contrastive loss instead of the mini-batch contrastive loss as in MobileCLIP.

\begin{figure}
    \centering
    \hspace{10pt} \includegraphics[width=0.7\linewidth]{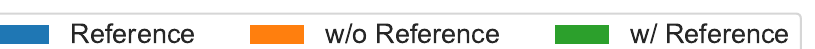}

    \includegraphics[width=0.7\linewidth]{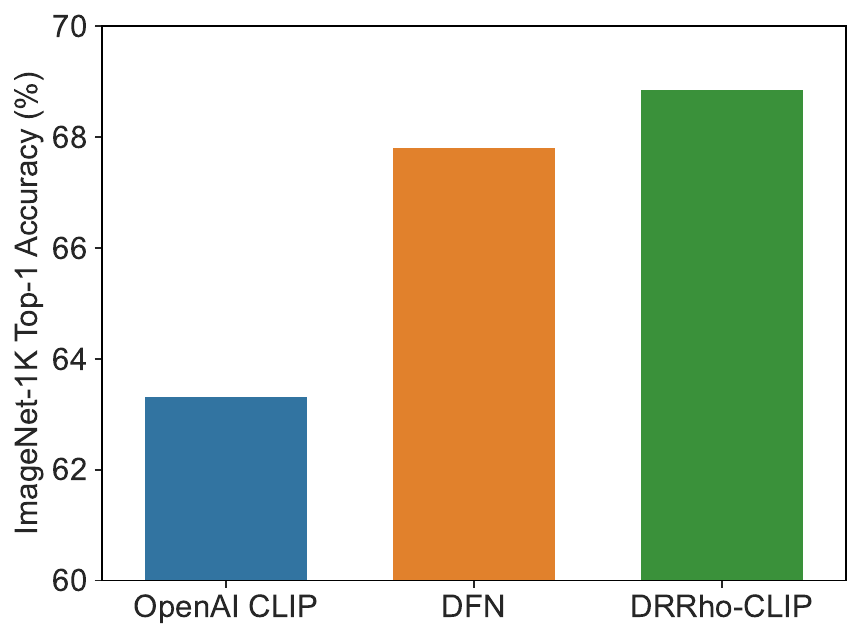}
    \caption{Zero-shot Top 1 Accuracy on ImageNet-1K of different models. DFN model was trained on DFN-192M dataset with 1.28B samples seen with batch size 8192~\citep{fang2024data}, DRRho-CLIP model was trained under the same setting with batch size 5120, and using OpenAI CLIP as a reference model.}\label{fig:teaser_imagenet2}
\end{figure}

We present the results on DFN-192M in \Cref{tab:baselines_dfn192m} (results on DFN-12M are presented in \Cref{tab:baselines_dfn12m} in \Cref{sec:app_exp_results} due to space limit). All methods reported train a target model of ViT-B/16 from scratch. From the results we arrive at the following conclusions: 
(1) Compared with JEST and JEST (Top-k), our approach achieves much better performance. This shows that our theory-driven approach  is better than the heuristic approaches. (2) When the reference model is relatively weak, e.g., ViT-B/32 (WIT-400M), DRRho-CLIP achieves better performance (68.84\% ImageNet Top-1 accuracy) than MobileCLIP (66.94\%) and the reference model itself (cf. Figure~\ref{fig:teaser_imagenet2}); (3) when the reference models are strong, e.g., ViT-B/16 (DFN-2B), ViT-L/14 (DFN-2B), DRRho-CLIP (w/ Distillation) achieves the best result. 

\subsection{Scaling Law}
\label{sec:experiments:scaling}

Finally, we study the scaling law of DRRho-CLIP, running  with a batch size of 5120 on 8 H100 GPUs and using ViT-B/32 (WIT-400M) as the reference model. Similar to~\citet{cherti2023reproducible}, we aim at uncovering the scaling law in the following form: \(E= \alpha\cdot C^{\beta}\), where \(E\) denotes the ImageNet error rate, \(C\) denotes the amount of compute (GFLOPs), \(\alpha, \beta\) are real numbers that need to be determined. We choose number of samples seen \(T= 0.32, 0.64, 1.28, 2.56\)B for our experiments, and use ViT-B/32, ViT-B/16 and ViT-L/14 as the target model. We follow the open\_clip repository and use its calculated amount of compute for each model\footnote{\href{https://github.com/mlfoundations/open_clip/blob/main/docs/model_profile.csv}{github.com/mlfoundations/open\_clip/docs/model\_profile.csv}}.

To estimate $E$ for each $C$, we run our method with ViT-B/16 on datasets of varying sizes from 144M to 624M, which are subsets selected from DFN-2B~\citep{fang2024data}, and use the lowest error rate among different dataset sizes to estimate $E$. Next, we run our method with ViT-B/32 and ViT-L/14 on datasets of the same sizes as in ViT-B/16 training.
Then we fit the power law with different \(C\) and corresponding \(E\). We repeat the same procedure for OpenCLIP.  We plot the relationship \(\log E= \log\alpha+ \beta\log C\) in \Cref{fig:scaling} for both DRRho-CLIP and OpenCLIP.
We observe that DRRho-CLIP has a better scaling law than OpenCLIP with smaller $\beta$. This also demonstrates that our theory has practical implications.   

\section{Conclusion}

In this paper, we have presented a novel learning paradigm of model steering, which uses a reference model to guide the training. Different from other heuristic approaches for learning with a reference model, our framework is grounded on distributionally robust optimization theory. We provided generalization analysis for our framework to analyze the improved generalization and data efficiency of our method. We applied our approach to CLIP training and proposed DRRho-CLIP. Experiments of DRRho-CLIP not only verified the theory but also demonstrated its superiority over heuristic approaches and a better scaling law  than that of the state-of-the-art CLIP training method.

\section*{Acknowledgments}

We thank anonymous reviewers for constructive comments. This work used GPU resources at TAMU ACES and NCSA Delta through allocation CIS230245 from the Advanced Cyberinfrastructure Coordination Ecosystem: Services \& Support (ACCESS) program, which is supported by U.S. National Science Foundation grants \#2138259, \#2138286, \#2138307, \#2137603, and \#2138296. XW and TY were partially supported by National Science Foundation Award \#2306572 and \#2147253, National Institutes of Health Award \#R01HL168116. FY and FS were partially supported by National Science Foundation Award \#2326495 and \#2247060, and Lilly Endowment, Inc., through its support for the Indiana University Pervasive Technology Institute.

\section*{Impact Statement}

This paper presents work whose goal is to advance the field of 
Machine Learning. There are many potential societal consequences 
of our work, none of which we feel must be specifically highlighted here.

\bibliography{main}
\bibliographystyle{sty_icml25/icml2025}

\newpage
\appendix
\onecolumn

\section{Detailed Theoretical Analysis}
\label{sec:app_theory}

In this section, we provide generalization analysis of our DRRho framework.
Given a function class \(\mathcal{F}\) and set \(\mathcal{X}^{n}\) with \(n\) samples, define
\begin{equation*}
  N_{\infty}(\mathcal{F}, \epsilon, n)= \sup_{x\in \mathcal{X}^n} N(\mathcal{F}(x), \epsilon, \| \cdot\|_{\infty}),
\end{equation*}
where \(\mathcal{F}(x)= \{ (f(x_1), \ldots, f(x_n)): f\in \mathcal{F}\}\) for \(x\in \mathcal{X}^n\), and
\begin{equation*}
  N(V, \epsilon, \| \cdot\|):= \inf \{ N\in \mathbb{N}: \textrm{there is an \(\epsilon\)-cover of size \(N\) of \(V\) w.r.t. \(\| \cdot\|\)}\}.
\end{equation*}

\begin{theorem}[Theorem 3 in \cite{duchi2016variance}]
  \label{thm:excess_risk:covering}
  Assume that $f(\cdot)\in[M_0,M_1]$ with $M=M_1-M_0$ for all \(f\in \mathcal{F}\).
  Let $n \ge 8M^2/t$, $t \ge \log 12$, $\epsilon > 0$, and
  $\rho \ge 9t$. Then with probability at
  least $1 - 2 (3N_{\infty}\left(\mathcal{F}, \epsilon, 2n\right)+1)
  e^{-t}$,
  \begin{equation}
    \mathbb{E}[f(X)] \le 
    \sup_{P : D_{\phi}(P \| \hat{P}) \le \frac{\rho}{n}}
    \mathbb{E}_P[f(X)]
    + \frac{11}{3} \frac{M \rho}{n}
    + \left(2 + 4\sqrt{\frac{2t}{n}}\right) \epsilon
  \end{equation}
  for all $f \in \mathcal{F}$.
  Defining the empirical minimizer
  \begin{equation*}
    \hat{f} \in \argmin_{f \in \mathcal{F}}
    \left\{\sup_P \left\{ \mathbb{E}_P[f(X)]
    : D_{\phi}(P \| \hat{P}) \le \frac{\rho}{n} \right\} \right\}
  \end{equation*}
  we have with the same probability that
  \begin{equation}
    \mathbb{E}[\hat{f}]
    \le \inf_{f \in \mathcal{F}}
    \left\{\mathbb{E}[f]
    + 2 \sqrt{\frac{2 \rho}{n} \mathrm{Var}(f)}\right\}
    + \frac{19 M \rho}{3 n}
    + \left(2 + 4\sqrt{\frac{2t}{n}}\right) \epsilon.
  \end{equation}
\end{theorem}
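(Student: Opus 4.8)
The plan is to prove the two displays by combining a \emph{deterministic} variance-expansion of the $\chi^2$-robust objective with a \emph{uniform} empirical-process concentration bound controlled through the $\ell_\infty$-covering number. Throughout, write $R_n(f,\rho):=\sup_{P:D_\phi(P\|\hat P)\le \rho/n}\mathbb{E}_P[f]$, let $\bar f_n:=\mathbb{E}_{\hat P}[f]$ and $\hat\sigma^2(f):=\mathrm{Var}_{\hat P}(f)$ denote the empirical mean and variance, and shift so that $f\in[0,M]$ without loss of generality. The heart of the argument is the approximate identity $R_n(f,\rho)\approx \bar f_n+\sqrt{(2\rho/n)\,\hat\sigma^2(f)}$, together with the fact that with high probability the population mean $\mathbb{E}[f]$ is dominated by $\bar f_n$ plus a self-normalized fluctuation of the same order $\sqrt{(2\rho/n)\,\hat\sigma^2(f)}$.

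First I would establish the variance expansion (Step A), a purely analytic two-sided bound on $R_n(f,\rho)$. For the lower bound, plug the explicit tilted weights $p_i=\tfrac1n\bigl(1+\lambda(f(X_i)-\bar f_n)\bigr)$ into the $\chi^2$ constraint $\tfrac{1}{2n}\sum_i(np_i-1)^2\le\rho/n$; the choice $\lambda=\sqrt{2\rho/n}/\hat\sigma(f)$ saturates the constraint and yields $R_n(f,\rho)\ge \bar f_n+\sqrt{(2\rho/n)\,\hat\sigma^2(f)}$, valid provided the weights stay nonnegative, i.e. $\hat\sigma(f)\gtrsim M\sqrt{2\rho/n}$. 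For the upper bound I would pass to the Lagrangian dual of the $\chi^2$ problem, which has the form $\inf_{\eta}\{\eta+\sqrt{(1+2\rho/n)}\,\|(f-\eta)_+\|_{n,2}\}$, and expand it to get $R_n(f,\rho)\le \bar f_n+\sqrt{(2\rho/n)\,\hat\sigma^2(f)}+cM\rho/n$ for an absolute constant $c$. These corrections are where the $M\rho/n$ terms in the final bounds originate.

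Next I would prove the uniform concentration (Step B): with probability at least $1-2(3N_\infty(\mathcal F,\epsilon,2n)+1)e^{-t}$, simultaneously for all $f\in\mathcal F$, $\mathbb{E}[f]\le \bar f_n+\sqrt{(2\rho/n)\,\hat\sigma^2(f)}+(\text{lower order})$. I would bound $\sup_{f}(\mathbb{E}[f]-\bar f_n)$ by symmetrization with a ghost sample $X_1',\dots,X_n'$ and Rademacher signs — this is exactly what forces the cover to act on $2n$ points, explaining the argument $2n$ in $N_\infty(\mathcal F,\epsilon,2n)$. After replacing $\mathcal F$ by an $\epsilon$-cover of size $N_\infty(\mathcal F,\epsilon,2n)$, apply a Bernstein-type tail bound to each cover element (so the fluctuation is controlled by the \emph{variance}, not just the range), union bound over the cover and over the two sides, and pay the discretization error $(2+4\sqrt{2t/n})\,\epsilon$ for moving off the cover. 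The conditions $n\ge 8M^2/t$ and $\rho\ge 9t$ are precisely what is needed to make the per-element Bernstein deviation $\sqrt{\mathrm{Var}(f)\,t/n}+Mt/n$ convert into the self-normalized empirical form $\sqrt{(2\rho/n)\,\hat\sigma^2(f)}$ and to keep the tilted weights of Step A feasible.

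Finally I would assemble the two displays. Combining the Step-A lower bound with Step B gives, uniformly, $\mathbb{E}[f]\le R_n(f,\rho)+\tfrac{11}{3}\tfrac{M\rho}{n}+(2+4\sqrt{2t/n})\epsilon$, which is the first display. For the second, apply the first display to the empirical robust minimizer $\hat f$, then use its optimality $R_n(\hat f,\rho)\le R_n(f,\rho)$ for every competitor $f$, bound $R_n(f,\rho)$ by the Step-A upper bound, and finally convert the empirical quantities to population ones: $\bar f_n\to\mathbb{E}[f]$ and $\hat\sigma(f)\to\sqrt{\mathrm{Var}(f)}$ using the opposite one-sided deviation from Step B. The mean-deviation contributes one factor $\sqrt{(2\rho/n)\mathrm{Var}(f)}$ and the variance term contributes another, producing the coefficient $2$ and the enlarged constant $\tfrac{19M\rho}{3n}$. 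The main obstacle I anticipate is the boundary regime $\hat\sigma(f)\lesssim M\sqrt{2\rho/n}$, where the clean tilted-weight expansion breaks down and one must instead analyze the dual solution with $\eta$ near the top order statistics; handling this regime while simultaneously keeping the \emph{self-normalized} (empirical-variance) uniform bound of Step B sharp — so that the random variance can be controlled uniformly over $\mathcal F$ — is the delicate part, and it is what pins down the explicit constants $\tfrac{11}{3}$ and $\tfrac{19}{3}$.
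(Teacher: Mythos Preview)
The paper does not prove this statement: it is quoted as Theorem~3 of \cite{duchi2016variance} and used as a black-box input to the paper's own results (Theorem~\ref{thm:4}, Theorem~\ref{thm:ref}, and Corollaries~\ref{cor:ref}--\ref{cor:ref_same_family}). There is therefore no proof in this paper to compare your proposal against.

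For what it is worth, your sketch --- the two-sided variance expansion $R_n(f,\rho)\approx \bar f_n+\sqrt{(2\rho/n)\hat\sigma^2(f)}$ via explicit tilted weights and the $\chi^2$ dual, a uniform self-normalized (Bernstein-type) deviation obtained by symmetrization to a ghost sample and discretization to an $\ell_\infty$ cover on $2n$ points, and then chaining the first display through the optimality of $\hat f$ and the upper expansion to get the second --- is exactly the architecture of the original proof in \cite{duchi2016variance}. The boundary regime you flag (small empirical variance forcing the tilted weights negative) and the bookkeeping that fixes the constants $11/3$ and $19/3$ are also handled there. If you want a detailed comparison, it has to be against that reference rather than the present paper.
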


\begin{corollary}[Corollary 3.1 in \cite{duchi2016variance}]
  \label{cor:prob_vc_dim}
  In addition to the conditions of
  Theorem~\ref{thm:excess_risk:covering}, let $\mathcal{F}$ have finite
  VC-dimension $\mathsf{VC}(\mathcal{F})$. Then for a numerical constant $c <
  \infty$, the bounds in \Cref{thm:excess_risk:covering}
  hold with probability at least
  \begin{equation*}
    1 - \left(c \, \mathsf{VC}(\mathcal{F}) \left( \frac{16 M ne}{\epsilon}
      \right)^{\mathsf{VC}(\mathcal{F})-1} + 2\right) e^{-t}.
  \end{equation*}
\end{corollary}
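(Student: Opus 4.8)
The plan is to obtain the explicit probability by feeding a VC-dimension-based estimate of the uniform $\ell_\infty$ covering number $N_\infty(\mathcal{F}, \epsilon, 2n)$ into the guarantee already proved in Theorem~\ref{thm:excess_risk:covering}. That theorem delivers all of its conclusions with probability at least $1 - 2(3 N_\infty(\mathcal{F}, \epsilon, 2n) + 1)e^{-t} = 1 - (6 N_\infty(\mathcal{F}, \epsilon, 2n) + 2)e^{-t}$, under the stated hypotheses $n \ge 8M^2/t$, $t \ge \log 12$, $\rho \ge 9t$. Consequently it suffices to establish a covering bound of the form $N_\infty(\mathcal{F}, \epsilon, 2n) \le c'\,\mathsf{VC}(\mathcal{F})\,(16 M n e/\epsilon)^{\mathsf{VC}(\mathcal{F}) - 1}$ for a universal constant $c'$: substituting it and folding $6c'$ into the numerical constant $c$ turns the failure probability $(6 N_\infty + 2)e^{-t}$ into exactly $(c\,\mathsf{VC}(\mathcal{F})(16 M n e/\epsilon)^{\mathsf{VC}(\mathcal{F})-1} + 2)e^{-t}$. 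Thus the whole corollary reduces to one covering-number estimate; the rest is constant bookkeeping, and the hypotheses $C_1 n \ge 8M^2$ and $\rho \ge 9t$ are inherited unchanged since only the probability is being rewritten.

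First I would prove the covering-number lemma. Since $\mathsf{VC}(\mathcal{F})$ is by definition the VC-dimension of the subgraph class $\mathcal{G} = \{(x,t) \mapsto \mathbb{1}[t \le f(x)] : f \in \mathcal{F}\}$, I would fix an arbitrary $x = (x_1, \ldots, x_{2n})$ and discretize the range $[M_0, M_1]$ into $K = \lceil M/\epsilon \rceil$ levels $g_1 < \cdots < g_K$. Mapping each $f$ to its quantized vector $q(f) = (\lfloor (f(x_i) - M_0)/\epsilon \rfloor)_i$, any two functions with the same $q$ agree up to $\epsilon$ in $\ell_\infty$ on $x$, so the $\epsilon$-covering number of $\mathcal{F}(x)$ is at most the number of distinct quantization patterns. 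Each pattern is determined by the trace of $\mathcal{G}$ on the $2nK$ augmented points $\{(x_i, g_k)\}$, because $q(f)_i$ is read off from which thresholds satisfy $g_k \le f(x_i)$. Applying the Sauer–Shelah lemma to the VC class $\mathcal{G}$ on these $2nK$ points bounds the number of realizable traces, hence of patterns, by a polynomial of degree $\mathsf{VC}(\mathcal{F})$ in $2nK \asymp n M/\epsilon$, which after collecting constants has the required shape. Taking the supremum over $x$ gives the uniform bound, and the substitution above then completes the proof.

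The main obstacle is the covering-number lemma itself, specifically matching the exact exponent $\mathsf{VC}(\mathcal{F}) - 1$ and the factor $16$ inside the bracket. The naive Sauer–Shelah plus discretization argument sketched above most directly yields an exponent equal to the pseudo-dimension, i.e. the full subgraph VC-dimension, together with a somewhat different leading constant. Sharpening the exponent to $\mathsf{VC}(\mathcal{F}) - 1$ requires exploiting the monotone, down-closed structure of subgraphs in the threshold coordinate so that one threshold degree of freedom is absorbed, and then tracking constants carefully enough that the final multiplier remains universal. This combinatorial refinement — essentially the standard uniform-entropy estimate for VC-subgraph classes (cf. van der Vaart–Wellner) adapted to the $n$-dependent $\ell_\infty$ metric used in $N_\infty$ — is where the genuine work lies; the passage from the covering bound to the stated probability is then immediate.
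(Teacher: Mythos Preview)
Your approach is correct and is exactly how this corollary is obtained: one plugs a VC-subgraph covering-number bound into the probability guarantee of Theorem~\ref{thm:excess_risk:covering}, and the remaining work is constant bookkeeping. Note, however, that the paper does not supply its own proof of this statement at all --- it is quoted verbatim as Corollary~3.1 of \cite{duchi2016variance} and used as a black box in the proof of Theorem~\ref{thm:4}. So there is nothing in the paper to compare your argument against beyond the citation; your sketch (Sauer--Shelah on the subgraph class, with the monotone-in-threshold refinement to drop the exponent from $\mathsf{VC}(\mathcal{F})$ to $\mathsf{VC}(\mathcal{F})-1$, as in van der Vaart--Wellner Theorem~2.6.7 / Haussler's bound) is the standard route and would reproduce the cited result.
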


\begin{theorem}   \label{thm:4}
  Under the conditions of \Cref{thm:excess_risk:covering}, let \(d_{v}\) denote the VC-dimension of \(\Theta\) and \(c< \infty\) denote a constant.
  \begin{enumerate}
    \item For any \(\vec{\theta}\in \Theta\), let \(t= \log \frac{1}{\delta}+ \log c+ d_{v}\log 16e+ 2d_{v}\log n\) and \(\rho\geq 9t\) for \(\delta> 0\), then with probability at least \(1- \delta\),
      \begin{align*}
        R(\vec{\theta})\leq& \inf_{\vec{\theta}\in \Theta} \left(R(\vec{\theta}) + 2\sqrt{\frac{2 \rho}{ n} \mathrm{Var}(\ell(\vec{\theta}, \cdot))}\right)+ \left( \frac{25\rho}{3}+ 2\right)\frac{M}{n} \\
        &+ \sup_{P : D_{\phi}(P \| \hat{P}) \le \frac{\rho}{n}} \mathbb{E}_P[\ell(\vec{\theta}, \cdot)]- \min_{\vec{\theta}\in \Theta} \sup_{P : D_{\phi}(P \| \hat{P}) \le \frac{\rho}{n}} \mathbb{E}_P[\ell(\vec{\theta}, \cdot)].
      \end{align*}
    \item For the solution \(\vec{\theta}^*\) of
        \begin{equation}    \label{eq:dro_distribution}
            \min_{\vec{\theta}\in \Theta} \sup_{P : D_{\phi}(P \| \hat{P}) \le \frac{\rho}{n}} \mathbb{E}_P[\ell(\vec{\theta}, \cdot)],
        \end{equation}
        let \(t= \log \frac{1}{\delta}+ \log c+ d_{v}\log 16e+ 2d_{v}\log n\) and \(\rho\geq 9t\) for \(\delta> 0\), then with probability at least \(1- \delta\),
        \begin{equation*}
        R(\vec{\theta}^*)\leq \inf_{\vec{\theta}\in \Theta} \left(R(\vec{\theta}) + 2\sqrt{\frac{2 \rho}{ n} \mathrm{Var}(\ell(\vec{\theta}, \cdot))}\right)+ \left( \frac{25\rho}{3}+ 2\right)\frac{M}{n}.
        \end{equation*}
  \end{enumerate}
\end{theorem}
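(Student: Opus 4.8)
The plan is to derive Theorem~\ref{thm:4} from the Duchi--Namkoong bounds already recorded here as Theorem~\ref{thm:excess_risk:covering} and Corollary~\ref{cor:prob_vc_dim}, via two moves: instantiate the cover radius as $\epsilon = M/n$ so the residual collapses to an $O(1/n)$ term, and then replace the exact empirical DRO minimizer $\hat f$ by an arbitrary $\vtheta\in\Theta$, which is exactly what produces the extra ``suboptimality'' term in Part~1. Throughout I abbreviate $\widehat{R}_{\rho}(\vtheta) := \sup_{P : D_{\phi}(P \| \hat P) \le \rho/n} \mathbb{E}_P[\ell(\vtheta,\cdot)]$, and note that problem~\eqref{eq:dro_distribution} is literally $\min_{\vtheta\in\Theta}\widehat{R}_{\rho}(\vtheta)$, so its solution $\vtheta^*$ satisfies $\widehat{R}_{\rho}(\vtheta^*) = \min_{\vtheta''\in\Theta}\widehat{R}_{\rho}(\vtheta'')$. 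Consequently Part~2 is just the $\vtheta = \vtheta^*$ specialization of Part~1 (the final two terms there cancel), so it suffices to prove Part~1.

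First I would apply Corollary~\ref{cor:prob_vc_dim} to the loss class $\mathcal F = \{\ell(\vtheta,\cdot) : \vtheta\in\Theta\}$ (which we read as $\mathrm{VC}(\mathcal F) = d_v$, consistent with the preliminaries, with range $M$), taking $\epsilon = M/n$. Then $16Mne/\epsilon = 16en^2$, so the failure probability becomes $\big(c\,d_v(16en^2)^{d_v-1} + 2\big)e^{-t}$; since $\log(16en^2) = \log(16e) + 2\log n$, a routine computation shows that with $t = \log\frac1\delta + \log c + d_v\log(16e) + 2d_v\log n$ (after relabelling the universal constant to swallow the sub-leading factor $d_v$ and the additive $2$) this is at most $\delta$. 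On the resulting event of probability $\ge 1-\delta$, both conclusions of Theorem~\ref{thm:excess_risk:covering} hold with $\epsilon = M/n$; the side hypotheses $t \ge \log 12$ and $n \ge 8M^2/t$ are exactly those carried over from Theorem~\ref{thm:excess_risk:covering}.

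The core of the argument is a three-term chain on this event, valid simultaneously for every $\vtheta\in\Theta$. (i) The first (uniform) conclusion of Theorem~\ref{thm:excess_risk:covering}, applied to $f = \ell(\vtheta,\cdot)$, gives $R(\vtheta) \le \widehat{R}_{\rho}(\vtheta) + \tfrac{11}{3}\tfrac{M\rho}{n} + \big(2 + 4\sqrt{2t/n}\big)\tfrac{M}{n}$. (ii) Decompose $\widehat{R}_{\rho}(\vtheta) = \big(\widehat{R}_{\rho}(\vtheta) - \min_{\vtheta''}\widehat{R}_{\rho}(\vtheta'')\big) + \min_{\vtheta''}\widehat{R}_{\rho}(\vtheta'')$, where the first parenthesis is precisely the suboptimality term in the statement. (iii) To bound $\min_{\vtheta''}\widehat{R}_{\rho}(\vtheta'')$, pick a (near-)minimizer $\vtheta_0$ of $R(\vtheta') + 2\sqrt{\tfrac{2\rho}{n}\mathrm{Var}(\ell(\vtheta',\cdot))}$; since $\vtheta_0$ is deterministic, the fixed-$f$ upper bound on the empirical DRO objective underlying the second conclusion of Theorem~\ref{thm:excess_risk:covering} (the step that raises the residual from $\tfrac{11}{3}\tfrac{M\rho}{n}$ to $\tfrac{19}{3}\tfrac{M\rho}{n}$) yields $\widehat{R}_{\rho}(\vtheta_0) \le R(\vtheta_0) + 2\sqrt{\tfrac{2\rho}{n}\mathrm{Var}(\ell(\vtheta_0,\cdot))} + \tfrac{8}{3}\tfrac{M\rho}{n}$, hence $\min_{\vtheta''}\widehat{R}_{\rho}(\vtheta'') \le \widehat{R}_{\rho}(\vtheta_0) \le \inf_{\vtheta'}\big(R(\vtheta') + 2\sqrt{\tfrac{2\rho}{n}\mathrm{Var}(\ell(\vtheta',\cdot))}\big) + \tfrac{8}{3}\tfrac{M\rho}{n}$. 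Summing (i)--(iii) bounds $R(\vtheta)$ by $\inf_{\vtheta'}(\cdots)$, plus the suboptimality term, plus $\tfrac{11}{3}\tfrac{M\rho}{n} + \tfrac{8}{3}\tfrac{M\rho}{n} + \big(2 + 4\sqrt{2t/n}\big)\tfrac{M}{n} = \tfrac{19}{3}\tfrac{M\rho}{n} + \big(2 + 4\sqrt{2t/n}\big)\tfrac{M}{n}$, and it remains only to verify $\tfrac{19}{3}\rho + 2 + 4\sqrt{2t/n} \le \tfrac{25}{3}\rho + 2$, i.e.\ $4\sqrt{2t/n} \le 2\rho$, which follows from $\rho \ge 9t$, $t \ge \log 12 > 1$, and $n \ge 1$, since then $2\sqrt{2t/n} \le 2\sqrt{2t} \le 9t \le \rho$. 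Taking $\vtheta = \vtheta^*$ annihilates the suboptimality term and gives Part~2.

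This theorem is essentially a repackaging of Duchi--Namkoong's Theorem~3, so I do not expect a genuine obstacle; the one point needing care is in step~(iii), namely that the fixed-$f$ bound on $\widehat{R}_{\rho}(\vtheta_0)$ at the population inf-achiever $\vtheta_0$ holds on the \emph{same} event as the covering-number uniform bound. But that event is exactly the one on which Theorem~\ref{thm:excess_risk:covering} asserts its second conclusion ``with the same probability'', so we inherit it directly, letting $c$ absorb any extra union bound. Everything else---the choice $\epsilon = M/n$, the logarithmic $t$, the final consolidation of constants into $\tfrac{25\rho}{3}+2$, and the passage from $\hat f$ to an arbitrary $\vtheta$ (a one-line decomposition)---is routine. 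For completeness, Theorem~\ref{thm:dro} then follows from Part~2, since the $P$-ball and $\p$-ball descriptions of the $\phi$-divergence constraint coincide, and Theorem~\ref{thm:ref} follows from Part~2 applied to the shifted loss $\ell(\vtheta,\cdot) - \ell(\vtheta_{\mathrm{ref}},\cdot)$, whose range has width $2M$ in place of $M$, scaling the constants to $\tfrac{50\rho}{3}+4$ and the threshold to $32M^2$.
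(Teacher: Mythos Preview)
Your proposal is correct and follows essentially the same route as the paper: apply the first (uniform) conclusion of Theorem~\ref{thm:excess_risk:covering}, add and subtract $\min_{\vtheta''}\widehat{R}_\rho(\vtheta'')$, bound the minimum via the fixed-$f$ upper bound underlying the second conclusion (which the paper simply cites as ``following the proof of Theorem~3 in \cite{duchi2016variance}''), then set $\epsilon = M/n$ and invoke Corollary~\ref{cor:prob_vc_dim} to convert the covering-number probability into $1-\delta$ with the stated choice of $t$. Your write-up is in fact more explicit than the paper's about how the constant $\tfrac{25\rho}{3}+2$ emerges from $\tfrac{11}{3}+\tfrac{8}{3}=\tfrac{19}{3}$ plus the absorption $4\sqrt{2t/n}\le 2\rho$, and your care about step~(iii) holding on the same event is well placed; Part~2 and the downstream Theorems~\ref{thm:dro} and~\ref{thm:ref} follow exactly as you outline.
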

\begin{proof}
  We first prove the first part of the theorem, which follows the proof of Theorem~3 in \cite{duchi2016variance}.
  From \Cref{thm:excess_risk:covering} we know for any \(f\in \mathcal{F}\), with probability at least \(1- 2 (3N_{\infty}\left(\mathcal{F}, \epsilon, 2n\right)+1) e^{-t}\),
  \begin{align*}
    \mathbb{E}[f(X)] \le &
    \sup_{P : D_{\phi}(P \| \hat{P}) \le \frac{\rho}{n}}
    \mathbb{E}_P[f(X)]
    + \frac{11}{3} \frac{M \rho}{n}
    + \left(2 + 4\sqrt{\frac{2t}{n}}\right) \epsilon \\
    =& \min_{\hat{f}\in \mathcal{F}}\sup_{P : D_{\phi}(P \| \hat{P}) \le \frac{\rho}{n}}
    \mathbb{E}_P[\hat{f}(X)]
    + \frac{11}{3} \frac{M \rho}{n}
    + \left(2 + 4\sqrt{\frac{2t}{n}}\right) \epsilon \\
    &+ \sup_{P : D_{\phi}(P \| \hat{P}) \le \frac{\rho}{n}}
    \mathbb{E}_P[f(X)] - \min_{\hat{f}\in \mathcal{F}}\sup_{P : D_{\phi}(P \| \hat{P}) \le \frac{\rho}{n}}
    \mathbb{E}_P[\hat{f}(X)].
  \end{align*}
  Following the proof of Theorem~3 in \cite{duchi2016variance}, setting \(\epsilon= \frac{M}{n}, t= \log \frac{1}{\delta}+ \log c+ d_{v}\log 16e+ 2d_{v}\log n\), we know for any \(\vec{\theta}\in \Theta\), with \(\rho\geq 9t\) and probability at least \(1- 2 (3N_{\infty}\left(\mathcal{F}, \epsilon, 2n\right)+1) e^{-t}\),
  \begin{align*}
    R(\vec{\theta})\leq& \inf_{\vec{\theta}\in \Theta} \left(R(\vec{\theta}) + 2\sqrt{\frac{2 \rho}{ n} \mathrm{Var}(\ell(\vec{\theta}, \cdot))}\right)+ \left( \frac{25\rho}{3}+ 2\right)\frac{M}{n} \\
    &+ \sup_{P : D_{\phi}(P \| \hat{P}) \le \frac{\rho}{n}} \mathbb{E}_P[\ell(\vec{\theta}, \cdot)]- \min_{\vec{\theta}\in \Theta} \sup_{P : D_{\phi}(P \| \hat{P}) \le \frac{\rho}{n}} \mathbb{E}_P[\ell(\vec{\theta}, \cdot)].
  \end{align*}
  From \Cref{cor:prob_vc_dim} we know the probability is at least
  \begin{equation*}
    1 - \left(c \, \mathsf{VC}(\mathcal{F}) \left( \frac{16 M ne}{\epsilon}
      \right)^{\mathsf{VC}(\mathcal{F})-1}\right) e^{-t}
  \end{equation*}
  for some \(c< \infty\). Let \(d_{v}\) denote the VC-dimension of \(\mathcal{F}\), then we know the probability is at least
  \begin{align*}
    &1 - \left(c \, d_{v} \left( 16n^2e\right)^{d_{v}-1}\right) \exp(-t) \\
    =& 1- \exp\left(\log c+ \log d_{v}+ (d_{v}- 1)(\log 16e+ 2\log n)\right)\cdot \exp(-t) \\
    \ge& 1- \exp\left(\log c+ d_{v}\log 16e+ 2d_{v}\log n- t\right)
  \end{align*}
  Since \(t= \log \frac{1}{\delta}+ \log c+ d_{v}\log 16e+ 2d_{v}\log n\), we know that with \(\rho\geq 9t\) and probability at least \(1- \delta\),
  \begin{align*}
    R(\vec{\theta})\leq& \inf_{\vec{\theta}\in \Theta} \left(R(\vec{\theta}) + 2\sqrt{\frac{2 \rho}{ n} \mathrm{Var}(\ell(\vec{\theta}, \cdot))}\right)+ \left( \frac{25\rho}{3}+ 2\right)\frac{M}{n} \\
    &+ \sup_{P : D_{\phi}(P \| \hat{P}) \le \frac{\rho}{n}} \mathbb{E}_P[\ell(\vec{\theta}, \cdot)]- \min_{\vec{\theta}\in \Theta} \sup_{P : D_{\phi}(P \| \hat{P}) \le \frac{\rho}{n}} \mathbb{E}_P[\ell(\vec{\theta}, \cdot)].
  \end{align*}
  This completes the proof for the first part.
  The second part then follows from the fact that \(\vec{\theta}^*\) is the solution of \Cref{eq:dro_distribution}.
  This completes the proof.
\end{proof}


\begin{proof}[Proof of \Cref{thm:ref}]
  Since \(\tilde{\vec{\theta}}_*\) minimizes \Cref{eq:dro_ref_model}, applying \Cref{thm:4} with \(f(\cdot):= \ell(\vec{\theta}, \cdot)- \ell(\vec{\theta}_{\mathrm{ref}}, \cdot)\) (where the range becomes \(2M\)), letting \(t= \log \frac{1}{\delta}+ \log c+ d_{v}\log 16e+ 2d_{v}\log n\) and \(\rho\geq 9t\) for given \(\delta> 0\) and constant \(c< \infty\), then with probability at least \(1- \delta\),
  \begin{equation*}
    R(\tilde{\vec{\theta}}_*)- R(\vec{\theta}_{\mathrm{ref}}) \leq \inf_{\vec{\theta}\in \Theta_{o}} \left(R(\vec{\theta})- R(\vec{\theta}_{\mathrm{ref}}) + \sqrt{\frac{2\rho}{n} \mathrm{Var}(\ell(\vec{\theta}, \cdot)- \ell(\vec{\theta}_{\mathrm{ref}}, \cdot))}\right)+ \left( \frac{50\rho}{3}+ 4\right)\frac{M}{n},
  \end{equation*}
  which is
  \begin{equation*}
    R(\tilde{\vec{\theta}}_*)\leq \inf_{\vec{\theta}\in \Theta_{o}} \left(R(\vec{\theta}) + \sqrt{\frac{2\rho}{n} \mathrm{Var}(\ell(\vec{\theta}, \cdot)- \ell(\vec{\theta}_{\mathrm{ref}}, \cdot))}\right) + \left( \frac{50\rho}{3}+ 4\right)\frac{M}{n}.
  \end{equation*}
  This completes the proof.
\end{proof}

\begin{proof}[Proof of \Cref{cor:ref}]
    From the definition we have \(\vec{\theta}_{*}\in \Theta\). Thus we know
    \begin{equation*}
        \inf_{\vec{\theta}\in \Theta_{o}} \left(R(\vec{\theta}) + \sqrt{\frac{2\rho}{n} \mathrm{Var}(\ell(\vec{\theta}, \cdot)- \ell(\vec{\theta}_{\mathrm{ref}}, \cdot))}\right)\leq R({\vtheta}_*) + \sqrt{\frac{2\rho}{n} \mathrm{Var}(\ell(\vtheta_*, \cdot)- \ell(\vec{\theta}_{\mathrm{ref}}, \cdot))}.
    \end{equation*}
    Plugging the above inequality into \Cref{thm:ref}, we get
    \begin{equation*}
        R(\tilde\vtheta_*)\leq R({\vtheta}_*) + \sqrt{\frac{2\rho}{n} \mathrm{Var}(\ell(\vtheta_*, \cdot)- \ell(\vec{\theta}_{\mathrm{ref}}, \cdot))}+ \frac{C_{2}}{n}.
    \end{equation*}
    This completes the proof.
\end{proof}

\begin{proof}[Proof of \Cref{cor:ref_same_family}]
    Since \(\vec{\theta}_{\mathrm{ref}}\in \Theta\), we know
    \begin{equation*}
        \inf_{\vec{\theta}\in \Theta_{o}} \left(R(\vec{\theta}) + \sqrt{\frac{2\rho}{n} \mathrm{Var}(\ell(\vec{\theta}, \cdot)- \ell(\vec{\theta}_{\mathrm{ref}}, \cdot))}\right)\leq R(\vec{\theta}_{\mathrm{ref}}) + \sqrt{\frac{2\rho}{n} \mathrm{Var}(\ell(\vec{\theta}_{\mathrm{ref}}, \cdot)- \ell(\vec{\theta}_{\mathrm{ref}}, \cdot))}= R(\vec{\theta}_{\mathrm{ref}}).
    \end{equation*}
    Plugging the above inequality into \Cref{thm:ref}, and subtracting \(R(\vtheta_*)\) on both sides, we get
    \begin{equation*}
        R(\tilde\vtheta_*) - R(\vtheta_*)\leq R(\vec{\theta}_{\mathrm{ref}}) - R(\vtheta_*)+ \frac{C_{2}}{n}.
    \end{equation*}
    This completes the proof.
\end{proof}

\section{Connection between DRRho and Existing Methods}

In \Cref{sec:ref_model} we show that our proposed framework can be applied to data selection, weighting and sampling. Here we provide a comparison between our framework and existing methods.

\textbf{Data Selection}: With the CVaR divergence, the DRRho objective becomes the average of top-\(k\) RHO losses~(\Cref{eqn:drrho1}). This is connected to RHO~\citep{mindermann2022prioritized} and RHO-1~\citep{lin2024not}. The key idea of both RHO and RHO-1 is both to first sample a large batch of data points, then leverage a reference model to compute the RHO loss for each sample in the large batch, and in the end only samples with the largest RHO loss values are kept for back-propagation. The major difference between our framework and the two existing works is that our objective will select top samples in the \textit{whole dataset} for back-propagation, while existing works operate in batch level.

\textbf{Data Weighting / Sampling}: With the KL-divergence, our framework becomes a data weighting method that assigns different weights to different samples. In the CLIP training setting, data weighting is applied to every combination of image and text. \citet{evans2025bad} applied the idea for data sampling and proposed ActiveCLIP. Specifically, given a large batch of data points, they first compute the RHO loss using a reference model for each sample in the batch, then they pass the loss scores through a softmax function and view it as a probability distribution to sample a small batch for back-propagation. \citet{evans2024data} builds upon the idea of ActiveCLIP and proposed JEST, which leverages the RHO loss as well but use a fine-grained sampling approach due to the special structure of the contrastive loss. Note that both ActiveCLIP and JEST are proposed for training CLIP models, which is the same as our DRRho-CLIP. The core difference between our DRRho-CLIP and ActiveCLIP and JEST is that their losses are used for data selection of anchor data while ours has an effect of data re-weighting of the negative data for each anchor data. Moreover, their methods use mini-batch contrastive loss to define a RHO loss. While we define the loss using all negative data in the training dataset (instead of the mini-batch) and use a rigorous optimization approach (SogCLR) to optimize the objective.

\section{More Experiment Results}
\label{sec:app_exp_results}

\subsection{Implementation Details}

Our implementation is based on FastCLIP~\citep{wei2024fastclip}, which includes the implementation of FastCLIP and OpenCLIP. We leveraged the code provided by~\citet{evans2024data} with minor modification for our implementation of JEST and JEST (Top-k). We merge the code released by~\citet{vasu2024mobileclip} into our code base for the implementation of MobileCLIP.

\subsection{Hyperparameters}\label{app:hyper}

\textbf{DRRho-CLIP}. We set the learning rate to 3.125e-4 for training ViT-B/16, and set the learning rate to 8e-4 for training ViT-B/32. In most experiments in \Cref{sec:experiments}, we set the temperature to 0.01. The only exception is the one with reference model trained on DFN-9M where the temperature is learnable (cf. \Cref{sec:app_exp_results:ablation}) and the initial temperature is set to 0.07.

\textbf{Baselines: OpenCLIP and FastCLIP} We use the FastCLIP-v3 from \citet{wei2024fastclip} as the FastCLIP baseline. For both OpenCLIP and FastCLIP, we set the learning rate to 3.125e-4 for training ViT-B/16, and set the learning rate to 8e-4 for training ViT-B/32. We set the initial temperature to 0.07. For FastCLIP, we set the learning rate of \(\tau\) to be 1/4 of the learning rate of the model, we set \(\rho\) to 11.0 for training ViT-B/16 and 8.5 for training ViT-B/32.

\textbf{JEST and JEST (Top-$k$)} We set the learning rate to 3.125e-4 for training ViT-B/16. Following~\citet{evans2024data}, the selection ratio is set to 0.2, which means only 20\% of the super-batch will be used for training. The size of the super-batch at each iteration is 25600 and the size of the mini-batch for training is 5120, which is the same as other methods. We set the number of chunks to 2 so that the chunk size of 2560, which is close to the value 2048 suggested by~\citet{evans2024data}. At selection ratio 0.2, JEST and JEST (Top-$k$) spends 5 times more compute on forward propagation (and the same amount of compute on backward propagation) compared with other approaches. Since the amount of compute of forward propagation is approximately 1/3 of that of backward propagation~\citep{evans2024data}, we increase the number of iterations to 1.87 times of that of other approaches so that these methods consume the same amount of compute.

\textbf{MobileCLIP, FastCLIP with Distillation, DRRho-CLIP with Distillation} We set the learning rate to 3.125e-4 for training ViT-B/16. We set the value of \(\lambda\) of DRRho-CLIP with Distillation to be the same as that of FastCLIP with Distillation. On DFN-12M, for MobileCLIP, we set the value of \(\lambda\) to 0.4; for FastCLIP with Distillation, we set the value of \(\lambda\) to 0.25. On DFN-192M, for MobileCLIP, we set the value of \(\lambda\) to 0.75; for FastCLIP with Distillation, we set the value of \(\lambda\) to 0.25. On DFN-192M with reference model ViT-B/32 (WIT-400M), since the performance of the reference model is low, the value \(\lambda\) is set to 0.0 (i.e., no distillation) for all three methods.

\subsection{Additional Ablation Study}
\label{sec:app_exp_results:ablation}

\textbf{Comparison between Fixed and Learnable Temperature \(\tau\)}. In most our experiments of DRRho-CLIP, we set the temperature \(\tau\) to 0.01 and fix it throughout training. The only exception is the one with reference model trained on DFN-9M where we set \(\tau\) to a learnable hyperparameter as in  FastCLIP(-v3), which gives better performance than using a fixed temperature. To further study this,  we conduct experiments to compare the performance of learnable temperature and fixed temperature. Mathematically, the loss formulation with learnable temperature is similar to \Cref{eq:drrho_clip:image,eq:drrho_clip:text} except that the temperature now becomes a parameter that can be optimized:
\begin{equation*}
    \min_{\vec{\theta}, \tau} \frac{1}{n}\sum_{i= 1}^{n} \tau \log \bigg(\frac{1}{n}\sum_{j=1}^n \exp(\frac{ \hat{\ell}(\vec{\theta}, \vec{\theta}_{\mathrm{ref}}, \vec{x}_{i}, \vec{y}_{j})}{\tau})\bigg)+ \tau \log \bigg(\frac{1}{n}\sum_{j=1}^n \exp(\frac{ \hat{\ell}(\vec{\theta}, \vec{\theta}_{\mathrm{ref}}, \vec{y}_{i}, \vec{x}_{j})}{\tau})\bigg)+ 2\tau\rho,
\end{equation*}
where \(\rho> 0\) is a hyperparameter.
We conduct experiments of DRRho-CLIP with fixed and learnable temperature on different target model and reference model combinations. We present the results in \Cref{tab:ablation_temperature,fig:ablation_temperature}. We find that, for a reference model with high performance, using a fixed temperature leads to slightly better performance than using a learnable temperature. While for reference models with low performance, a learnable temperature yields higher performance. 
\begin{table}[htbp]
    \centering
    \caption{ImageNet Top 1 accuracy of DRRho-CLIP with fixed and learnable temperature on different target models and reference models.}
    \begin{tabular}{ll|cccc}
        \toprule
        & & \multicolumn{4}{c}{ImageNet Top 1 Accuracy} \\
        & & FastCLIP & DRRho-CLIP & DRRho-CLIP & Reference \\
        \multirow{-3}{*}{Target Model (Data)} & \multirow{-3}{*}{Reference Model (Data)} & & (Learnable \(\tau\)) & (Fixed \(\tau\)) & \\
        \midrule
        ViT-B/16 (DFN-12M) & ViT-B/32 (DFN-12M) & 43.49 & \textbf{46.31} & 37.65 & 36.27 \\
        ViT-B/16 (DFN-12M) & ViT-B/32 (WIT-400M) & 43.49 & 49.81 & \textbf{49.91} & 63.32 \\
        ViT-B/16 (DFN-192M) & ViT-B/32 (WIT-400M) & 67.37 & 68.17 & \textbf{68.84} & 63.32 \\
        \bottomrule
    \end{tabular}
    \label{tab:ablation_temperature}
\end{table}

\begin{figure}[htbp]
    \centering
    \includegraphics[width=0.45\linewidth]{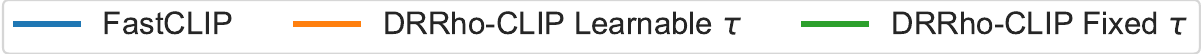}

    \begin{subfigure}{0.3\textwidth}
        \includegraphics[width=\linewidth]{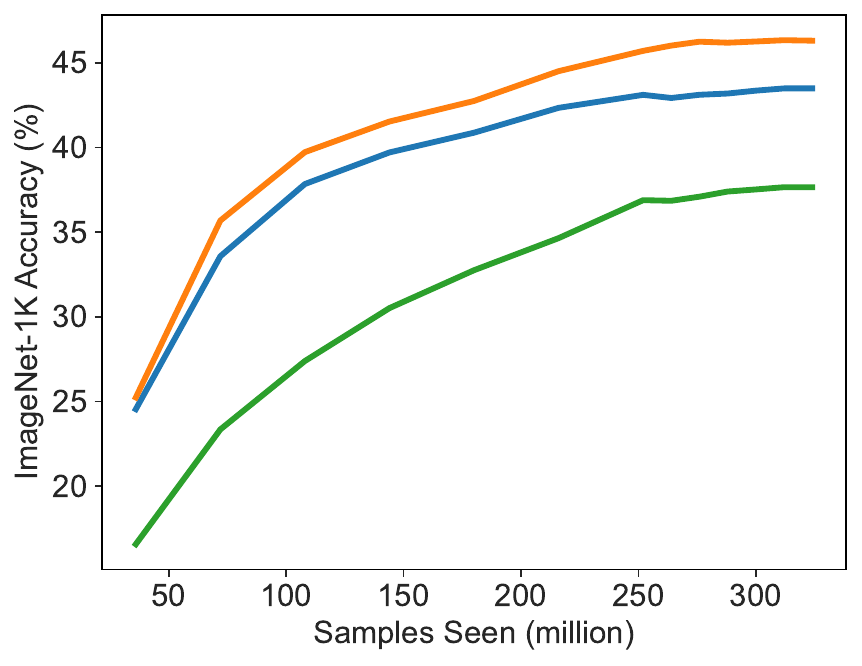}
        \caption{Target: \;\;\;\;\, ViT-B/16 (DFN-12M) \\ Reference: ViT-B/32 (DFN-12M)}
    \end{subfigure}
    \hfill
    \begin{subfigure}{0.3\textwidth}
        \includegraphics[width=\linewidth]{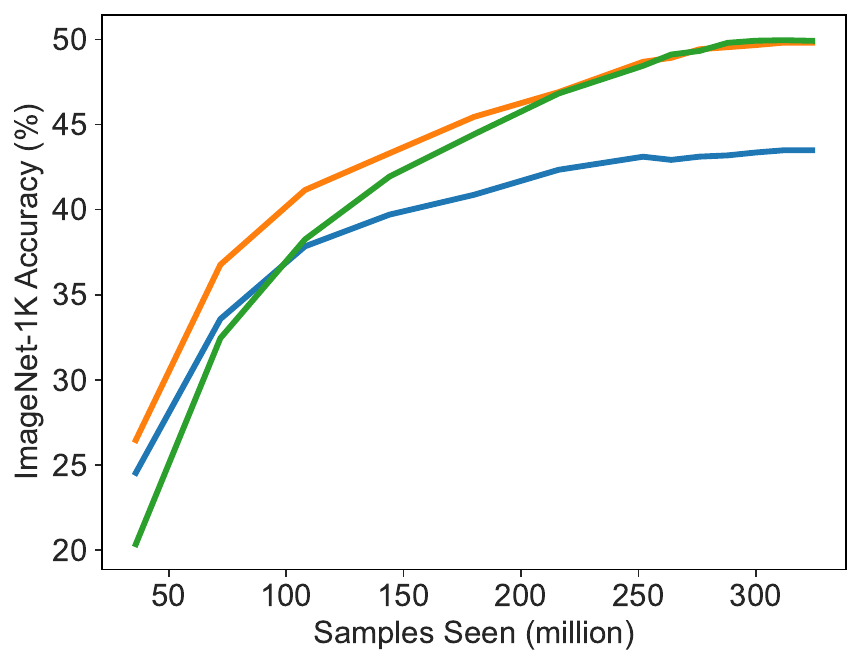}
        \caption{Target: \;\;\;\;\, ViT-B/16 (DFN-12M) \\ \, Reference: ViT-B/32 (WIT-400M)}
    \end{subfigure}
    \hfill
    \begin{subfigure}{0.3\textwidth}
        \includegraphics[width=\linewidth]{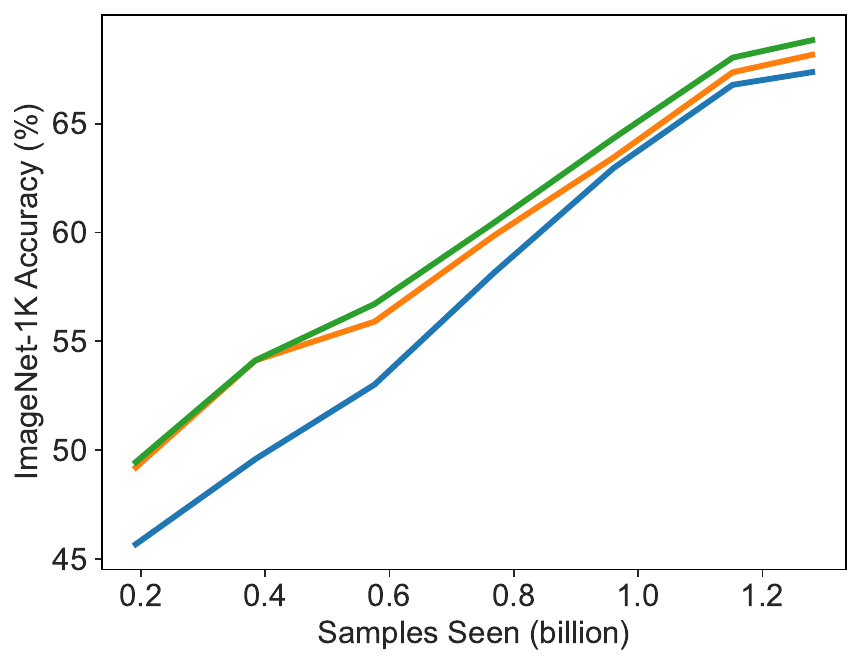}
        \caption{Target: \;\;\;\;\, ViT-B/16 (DFN-192M) \\ Reference: ViT-B/32 (WIT-400M)}
    \end{subfigure}
    \caption{ImageNet Top 1 accuracy curves of DRRho-CLIP with fixed and learnable temperature on different target models and reference models.}
    \label{fig:ablation_temperature}
\end{figure}

\subsection{Comparison with Baselines}

The distillation loss we consider is the same as in~\citet{vasu2024mobileclip}, and is defined as follows
\begin{equation}    \label{eq:distillation}
    \begin{aligned}
        \mathcal{L}_{\mathrm{distill}}(\vec{\theta}, \vec{\theta}_{\mathrm{ref}}, \mathcal{B}):=& -\frac{1}{|\mathcal{B}|^{2}}\sum_{i\in \mathcal{B}}\sum_{j\in \mathcal{B}} \frac{\exp(\hat{s}(\vec{x}_{i}, \vec{y}_{j})/ \hat{\tau})}{\sum_{k\in \mathcal{B}} \exp(\hat{s}(\vec{x}_{i}, \vec{y}_{k})/ \hat{\tau})} \log \frac{\exp(s(\vec{x}_{i}, \vec{y}_{j})/ \tau)}{\sum_{k\in \mathcal{B}} \exp(s(\vec{x}_{i}, \vec{y}_{k})/ \tau)} \\
        &- \frac{1}{|\mathcal{B}|^2}\sum_{i\in \mathcal{B}}\sum_{j\in \mathcal{B}} \frac{\exp(\hat{s}(\vec{x}_{j}, \vec{y}_{i})/ \hat{\tau})}{\sum_{k\in \mathcal{B}} \exp(\hat{s}(\vec{x}_{k}, \vec{y}_{i})/ \hat{\tau})} \log \frac{\exp(s(\vec{x}_{j}, \vec{y}_{i})/ \tau)}{\sum_{k\in \mathcal{B}} \exp(s(\vec{x}_{k}, \vec{y}_{i})/ \tau)},
    \end{aligned}
\end{equation}
where \(s(\vec{x}_{i}, \vec{y}_{j})\) (\(\hat{s}(\vec{x}_{i}, \vec{y}_{j})\), resp.) denotes the cosine similarity between \(i\)-th image and \(j\)-th text output by the target model (reference model, resp.).

In \Cref{tab:baselines_dfn12m}, we present the results of different baselines on DFN-12M with 320M samples seen.
\begin{table}[htbp]
  \centering
  \caption{Comparison of different methods on DFN-12M with 320M samples seen. Reference denotes the performance of the reference model. OpenCLIP and FastCLIP does not leverage a reference model. For distillation-based methods, the reference model is also the teacher model for distillation.}
  \begin{tabular}{clccc}
    \toprule
    & & \multicolumn{3}{c}{Reference Model} \\
    \multirow{-2}{*}{Metric} & \multirow{-2}{*}{Method} & ViT-B/32 (WIT-400M) & ViT-B/16 (DFN-2B) & ViT-L/14 (DFN-2B) \\
    \midrule
    & Reference & 63.32 & 76.23 & 81.41 \\
    \cline{2-5}
    & OpenCLIP & 41.10 & 41.10 & 41.10 \\
    & FastCLIP & 43.49 & 43.49 & 43.49 \\
    \cline{2-5}
    & JEST & 36.78 & 34.63 & 33.46 \\   
    & JEST (Top-k) & 36.30 & 34.75 & 33.38 \\
    & DRRho-CLIP & {\bf 49.95} & {\bf 47.57} & {\bf 46.70} \\
    \cline{2-5}
    & MobileCLIP (w/ Distillation)& 52.77 & 48.01 & 45.80 \\
    \multirow{-8}{*}{ImageNet} & FastCLIP  (w/ Distillation) & 53.21 & 50.38 & 47.86 \\
    \multirow{-8}{*}{Top 1} & DRRho-CLIP (w/ Distillation) & {\bf 53.33} & {\bf 52.58} & {\bf 49.84} \\
    \midrule
    & Reference & 52.27 & 60.75 & 66.65 \\
    \cline{2-5}
    & OpenCLIP & 37.67 & 37.67 & 37.67 \\
    & FastCLIP & 38.57 & 38.57 & 38.57 \\
    \cline{2-5}
    & JEST & 35.59 & 35.42 & 34.36 \\
    & JEST (Top-k) & 34.60 & 34.56 & 34.88 \\
    & DRRho-CLIP & {\bf 43.02} & {\bf 42.15} & {\bf 41.47} \\
    \cline{2-5}
    & MobileCLIP & 44.57 & 41.68 & 39.81 \\
    & FastCLIP (w/ Distillation) & 44.57 & 43.72 & 42.04 \\
    \multirow{-9}{*}{Datacomp}& DRRho-CLIP (w/ Distillation) & {\bf  46.29} & {\bf 43.88} & {\bf 42.49} \\
    \bottomrule
  \end{tabular}
  \label{tab:baselines_dfn12m}
\end{table}

In the following figure, we plot the ImageNet-1K Top 1 accuracy of different models trained using OpenCLIP or DRRho-CLIP.
\begin{figure}[H]
    \centering
    \includegraphics[width=0.43\textwidth]{figs/legend_imagenet.pdf}

    \begin{subfigure}{0.4\textwidth}
        \includegraphics[width=\textwidth]{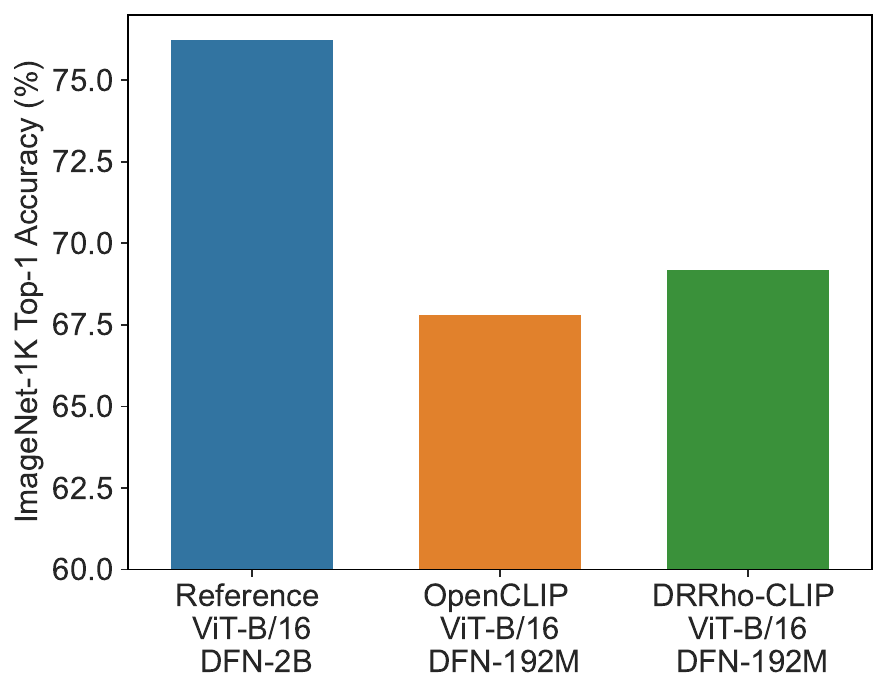}
    \end{subfigure}
    \hspace{20pt}
    \begin{subfigure}{0.4\textwidth}
        \includegraphics[width=\textwidth]{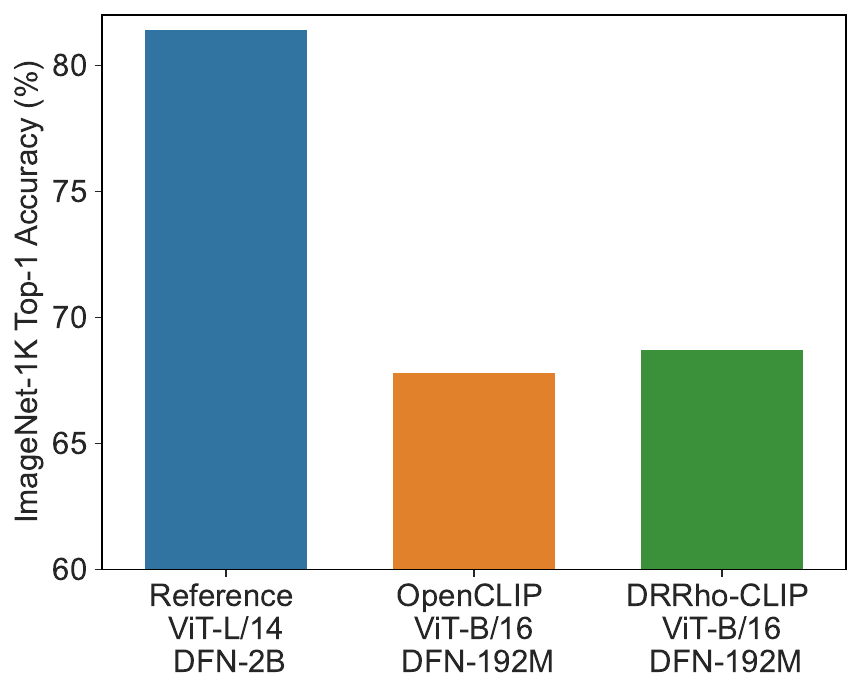}
    \end{subfigure}
    \caption{Zero-shot Top 1 Accuracy on ImageNet-1K of different models. Reference model is a ViT-B/16 (left figure) or ViT-L/14 (right figure) pretrained on DFN-2B dataset with 12.8B samples seen and a batch size of 90112~\citep{fang2024data}; OpenCLIP model is a ViT-B/16 trained on DFN-192M dataset with 1.28B samples seen and a batch size of 8192~\citep{fang2024data}, DRRho-CLIP model was trained on DFN-192M dataset with 1.28B samples seen and using the reference model with a batch size of 5120. The latter two use the same model structure.}
    \label{fig:imagenet_reference}
\end{figure}





\end{document}